\documentclass{article} %
\usepackage{iclr2026_conference}
\usepackage{times}

\usepackage[english]{babel} %
\usepackage[T1]{fontenc} %
\usepackage[utf8]{inputenc} %
\usepackage[babel]{microtype} %

\usepackage{url}
\usepackage{csquotes}
\usepackage{amsmath,amssymb,amsthm,bm,mathtools}
\usepackage{mathabx}  %
\usepackage[pdftex]{graphicx}
\graphicspath{{fig/}}
\usepackage{enumitem}
\usepackage{booktabs}
\usepackage{siunitx}
\sisetup{detect-weight=true, detect-family=true}

\usepackage{hyperref}

\usepackage{algorithm}
\usepackage{algpseudocode}

\newcommand{\Email}[1]{\href{mailto:#1}{\nolinkurl{#1}}}
\newcommand{\Tr}{\top}  %

\DeclareMathOperator*{\Argmin}{arg\,min}

\DeclareMathOperator{\Rank}{rank}
\DeclareMathOperator{\Span}{span}
\DeclareMathOperator{\Col}{col}
\DeclareMathOperator{\Null}{null}

\newlist{enuminline}{enumerate*}{1}
\setlist[enuminline,1]{label=\itshape\alph*\upshape)}

\theoremstyle{plain}
\newtheorem{lemma}{Lemma}
\newtheorem*{lemma*}{Lemma}
\newtheorem{theorem}{Theorem}
\newtheorem{corollary}{Corollary}

\title{When Embedding Models Meet:\\
Procrustes Bounds and Applications}

\author{Lucas Maystre, Alvaro Ortega Gonzalez, Charles Park, Rares Dolga \& Tudor Berariu \\
UiPath \\
\And
Yu Zhao \& Kamil Ciosek \\
Spotify \\
}

\iclrfinalcopy
\begin{document}

\maketitle

\begin{abstract}%
Embedding models trained separately on similar data often produce representations that encode stable information but are not directly interchangeable.
This lack of interoperability raises challenges in several practical applications, such as model retraining, partial model upgrades, and multimodal search.
Driven by these challenges, we study when two sets of embeddings can be aligned by an orthogonal transformation.
We show that if pairwise dot products are approximately preserved, then there exists an isometry that closely aligns the two sets, and we provide a tight bound on the alignment error.
This insight yields a simple alignment recipe, Procrustes post-processing, that makes two embedding models interoperable while preserving the geometry of each embedding space.
Empirically, we demonstrate its effectiveness in three applications: maintaining compatibility across retrainings, combining different models for text retrieval, and improving mixed-modality search, where it achieves state-of-the-art performance.
\end{abstract}

\section{Introduction}
\label{sec:intro}

Representing objects as dense vectors is central to many key applications of machine learning \citep{bengio2013representation}.
In recommender systems, low-dimensional embeddings capture user preferences for content~\citep{koren2009matrix}.
In text or image search applications, embedding models enable efficient semantic similarity and relevance computation~\citep{deerwester1990indexing, reimers2019sentence}.

Embedding models are typically trained to capture notions of similarity between objects as geometric relationships in Euclidean space.
Specifically, loss functions underpinning representation learning methods usually depend only on distances or dot-products between embeddings.
Such loss functions are therefore orthogonally invariant: any rotation and reflection of the embedding space yields an identical loss function value.
This invariance makes embeddings under-specified.
Two distinct models might capture similar geometrical relationships but produce embeddings that are not directly comparable.
This becomes problematic when multiple embedding models are used together.
\begin{description}
\item[Model retraining.] 
To capture concept drift, it is sometimes necessary to retrain the embedding model on fresh data, resulting in successive versions of an embedding space \citep{shiebler2018fighting, steck2021deep}.
Because the spaces are not aligned, downstream systems trained on embeddings from one version cannot be used with embeddings from another version.
This creates challenges when embedding models and downstream systems are retrained at different cadences \citep{hu2022learning}.

\item[Partial upgrades.]
In retrieval, relevance is often predicted by the dot product between query and document embeddings.
A practical difficulty arises when the query model is upgraded but document embeddings cannot be recomputed, either because the raw documents are not available \citep{morris2023text, huang2024transferable}, or recomputation is too costly \citep{shen2020towards, arora2020contextual}.

\item[Multimodal embeddings.]
Models such as CLIP \citep{radford2021learning} and SigLIP \citep{zhai2023sigmoid} embed text and images into a shared space, enabling cross-modal comparison.
Yet these models have been observed to exhibit a \emph{modality gap}, where embeddings cluster by modality into distinct regions of Euclidean space \citep{liang2022mind}.
This prevents meaningful comparison of dot products across heterogeneous pairs of modalities, and degrades the performance of mixed-modality search \citep{li2025closing}.
\end{description}

Driven by these practical settings, we consider the problem of aligning two sets of vectors that approximately preserve geometry.
Specifically, we study the orthogonal Procrustes problem \citep{hurley1962procrustes, schoenemann1966generalized}, which asks for an orthogonal transformation that minimizes the average squared distance between corresponding vectors in each set.
In this paper, we ask the question:
How well does the optimal orthogonal transformation align the two sets of vectors, assuming only that dot products are approximately preserved across the two sets?
In Section~\ref{sec:theory}, we address this question by providing a tight bound on the average distance between a vector from the first set and the aligned version of the corresponding vector in the second set.
In the regime of interest, our bound improves on the state of the art~\citep{tu2016low, ariascastro2020perturbation, pumir2021generalized}, and settles a conjecture of \citet[Remark~1]{ariascastro2020perturbation}.

These results suggest a simple recipe to make two embedding models interoperable:
Post-process embeddings produced by one model by applying the orthogonal Procrustes transformation with respect to the other model.
This maximizes cross-model alignment without affecting the geometry of the embeddings produced by each model.
We illustrate this procedure in Figure~\ref{fig:procrustes}.
In Section~\ref{sec:eval}, we empirically evaluate the effectiveness of Procrustes post-processing across the three applications introduced above.
We find that it successfully addresses the corresponding challenges, without any modification to the underlying representation learning method.
Among others, we find that
\begin{enuminline}
\item post-processing successive model versions effectively solves the version mismatch problem,
\item using a more powerful query embedding can dramatically improve text retrieval performance, but only once it is aligned with the document embedding model, and
\item Procrustes post-processing provides state-of-the art performance on a mixed-modality search benchmark, outperforming recent work by \citet{li2025closing}.
\end{enuminline}

\begin{figure}[t]
  \centering
  \includegraphics{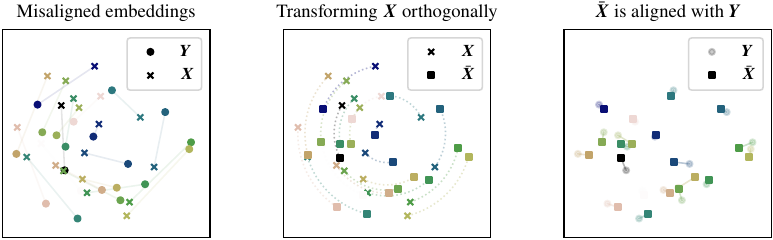}
  \caption{%
We start with two sets of embeddings $\bm{X}$ and $\bm{Y}$ that approximately preserve distances but are unaligned (\emph{left}).
We find $\bar{\bm{X}}$ by apply the orthogonal Procrustes transformation to $\bm{X}$ (\emph{center}).
$\bar{\bm{X}}$ retains the exact geometry of $\bm{X}$ but the embeddings are now aligned with $\bm{Y}$ (\emph{right}).
}
  \label{fig:procrustes}
\end{figure}

\paragraph{Contributions.}
Our main contribution is a theoretical result establishing that if two embedding models approximately preserve dot products, they can be aligned through an orthogonal transformation, enabling interoperability.
While orthogonal alignment is a well-established technique and is already used in practice, we believe its theoretical underpinnings and broad applicability remain underappreciated.
To this end, we complement our analysis with experiments in three real-world applications, both reinforcing prior empirical findings and providing new insights.

\subsection{Preliminaries and notation}

We consider two sets of $N$ vectors in $\mathbf{R}^D$, arranged into $D \times N$ source and target embedding matrices $\bm{X} = \begin{bmatrix} \bm{x}_1 & \cdots & \bm{x}_N \end{bmatrix}$ and $\bm{Y} = \begin{bmatrix} \bm{y}_1 & \cdots & \bm{y}_N \end{bmatrix}$, respectively.
We assume that the $i$th vector encodes the same object across both embeddings.
For example, $\bm{x}_i$ and $\bm{y}_i$ consist of the same text passed through two different text embedding models, or they represent the same user in a recommender system application.
Given a function $f : \mathbf{R}^D \to \mathbf{R}$, we denote its empirical average over the embeddings as $\mathbf{E}_i[f(\bm{x}_i)] \doteq (1 / N) \sum_{i=1}^N f(\bm{x}_i)$.

We say that the $D \times D$ matrix $\bm{Q}$ is orthogonal if $\bm{Q}^\Tr \bm{Q} = \bm{I}_D$, where $\bm{I}_D$ is the identity matrix, and we denote the set of $D$-dimensional orthogonal matrices by $\mathcal{O}_D$.
Orthogonal transformations are isometries, i.e., they preserve distances and dot products exactly.\footnote{%
For example, it is easy to verify that for any $\bm{Q} \in \mathcal{O}_D$ and any $\bm{u}, \bm{v} \in \mathbf{R}^D$, we have $(\bm{Qu})^\Tr \bm{Qv} = \bm{u}^\Tr \bm{v}$.}
We would like to find an orthogonal transformation $\bm{Q} \in \mathcal{O}_D$ such that $\bar{\bm{x}}_i \doteq \bm{Q} \bm{x}_i$ for all $i \in [N]$ and $\lVert \bm{y}_i - \bar{\bm{x}}_i \rVert_2$ is small, on average.
Intuitively, we think of $\bm{Q}$ as aligning $\bm{X}$ and $\bm{Y}$.
Formally, we seek to solve
\begin{align}
\label{eq:procrust}
\textstyle
\bm{Q}^\star \in \Argmin_{\bm{Q} \in \mathcal{O}} \lVert \bm{Q}\bm{X} - \bm{Y} \rVert_F,
\end{align}
where $\lVert \cdot \rVert_F$ is the Frobenius norm.
This is known as the orthogonal Procrustes problem~\citep{hurley1962procrustes}, and $\lVert \bm{Q}^\star \bm{X} - \bm{Y} \rVert_F$ is referred to as the Procrustes distance between $\bm{X}$ and $\bm{Y}$.
In a seminal paper, \citet{schoenemann1966generalized} introduces a simple, computationally-efficient procedure for solving~\eqref{eq:procrust}, which we describe in Algorithm~\ref{alg:procrustes}.

\begin{algorithm}[t]
\caption{Orthogonal Procrustes \citep{schoenemann1966generalized}}
\label{alg:procrustes}
\begin{algorithmic}[1]
  \Require $\bm{X}, \bm{Y} \in \mathbf{R}^{D \times N}$
  \Ensure $\bm{Q}^\star \in \Argmin_{\bm{Q}} \lVert \bm{Q}\bm{X} - \bm{Y} \rVert_F$ subject to $\bm{Q}^\Tr \bm{Q} = \bm{I}$
  \State $\bm{U}\bm{\Sigma}\bm{V}^\Tr \gets$ singular value decomposition of $\bm{Y} \bm{X}^\Tr$
  \State $\bm{Q}^\star \gets \bm{U} \bm{V}^\Tr$
\end{algorithmic}
\end{algorithm}

\section{Related work}
\label{sec:relwork}

\paragraph{Isometries and approximate isometries.}
If $\bm{X}$ and $\bm{Y}$ approximately preserve geometry, we can view the mapping $\bm{x}_i \mapsto \bm{y}_i$ through the lens of \emph{approximate isometries}.
The Mazur-Ulam theorem states that every exact isometry in Euclidean space is an affine transformation \citep{ulam1932transformations}.
Building on this, \citet{hyers1945approximate} show that mappings that preserves distances approximately can be well-approximated by exact isometries, but their result applies only to mappings that are defined on entire vector spaces, e.g., all of $\mathbf{R}^D$.
\citet{fickett1982approximate} and \citet{alestalo2001isometric} study extensions of this result to bounded subsets of $\mathbf{R}^D$, but the resulting guarantees are impractical.

\paragraph{Theory of orthogonal Procrustes.}
\citet{soederkvist1993perturbation} derives a perturbation bound for orthogonal Procrustes in the special case where the alignment is restricted to rotations (orthogonal matrices with positive determinant).
\citet{tu2016low} introduce the first practical bound on the Procrustes distance in terms of the distance between Gram matrices, later refined by \citet{pumir2021generalized}.
\citet{ariascastro2020perturbation} independently obtain a similar result and study applications to multi-dimensional scaling.
As we discuss in Section~\ref{sec:theory}, our bounds are significantly tighter in the regime of interest.
Recently, \citet{harvey2024what} relate several representational similarity measures, including the Procrustes distance, and develop a result similar to ours but restricted to centered embedding matrices.

\paragraph{Applications of embedding alignment.}
\citet{shiebler2018fighting} and \citet{steck2021deep} discuss practical challenges of embedding models in large-scale online services.
Both highlight the need for periodic retraining to combat concept drift and difficulties created by misaligned successive versions, including organizational challenges.
To address these, \citet{elkishky2022twhin}, \citet{hu2022learning}, and \citet{gan2023binary} propose modifications to training procedures to produce aligned embeddings for recommender systems.
A different line of work studies embedding alignment for visual search, aiming to avoid costly backfilling (recomputing embeddings for existing images under a new model).
\citet{shen2020towards} and \citet{meng2021learning} introduce training objectives that promote compatibility across successive model versions.

\paragraph{Embedding alignment with orthogonal Procrustes.}
\citet{singer2019node} and \citet{tagowski2021embedding} apply orthogonal Procrustes to align successive node embeddings in time-varying graphs, demonstrating effectiveness for node classification and link prediction.
In natural language processing, alignment methods are widely used to relate word embeddings across languages.
Early work employs unconstrained linear transformations \citep{mikolov2013exploiting}, but subsequent papers
\citep{xing2015normalized, artetxe2016learning} show the importance of preserving each language's embedding geometry.
\citet{grave2019unsupervised} address a harder problem where no dictionary is available, requiring joint optimization of word mapping and embedding alignment.
For a comprehensive overview, we refer the reader to \citet{ruder2019survey}.
In recommender systems, concurrent work by \citet{zielnicki2025orthogonal} explores orthogonal Procrustes for aligning successive embedding model versions, closely related to our study in Section~\ref{sec:movielens}.

\section{Upper bound on the Procrustes distance}
\label{sec:theory}

Our motivating applications require combining two embedding models that encode similar geometric relationships but are not directly aligned.
This raises the question: Under what conditions can two embedding matrices be well-aligned by an orthogonal transformation?
We answer this question by providing a tight upper bound on the Procrustes distance, assuming only that pairwise dot products are approximately preserved across the two sets of vectors.

\begin{theorem}
\label{thm:main}
Let $\bm{X}, \bm{Y} \in \mathbf{R}^{D \times N}$, and assume that $\lVert \bm{X}^\Tr \bm{X} - \bm{Y}^\Tr \bm{Y} \rVert_F \le \varepsilon$.
Then,
\begin{align*}
\min_{\bm{Q} \in \mathcal{O}_D} \ \lVert \bm{Q} \bm{X} - \bm{Y} \rVert_F \le (2D)^{1/4} \sqrt{\varepsilon}.
\end{align*}
\end{theorem}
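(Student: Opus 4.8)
The plan is to reduce the Procrustes problem over the $D\times N$ matrices to a statement about their $N\times N$ Gram matrices $\bm{G}_X \doteq \bm{X}^\Tr\bm{X}$ and $\bm{G}_Y \doteq \bm{Y}^\Tr\bm{Y}$, bound the Procrustes distance by $\lVert\bm{G}_X^{1/2}-\bm{G}_Y^{1/2}\rVert_F$, and then exploit that $\bm{G}_X-\bm{G}_Y$ has rank at most $2D$. Three ingredients carry the argument: the closed form for the optimal Procrustes value underlying Algorithm~\ref{alg:procrustes}, a perturbation inequality for the matrix square root, and the elementary bound $\lVert\bm{M}\rVert_*\le\sqrt{\Rank(\bm{M})}\,\lVert\bm{M}\rVert_F$ between the nuclear norm $\lVert\cdot\rVert_*$ and the Frobenius norm.

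First I would expand, for any $\bm{Q}\in\mathcal{O}_D$, $\lVert\bm{Q}\bm{X}-\bm{Y}\rVert_F^2=\operatorname{tr}(\bm{G}_X)+\operatorname{tr}(\bm{G}_Y)-2\operatorname{tr}(\bm{Q}^\Tr\bm{Y}\bm{X}^\Tr)$, and use that $\max_{\bm{Q}\in\mathcal{O}_D}\operatorname{tr}(\bm{Q}^\Tr\bm{Y}\bm{X}^\Tr)=\lVert\bm{Y}\bm{X}^\Tr\rVert_*$, attained by the matrix returned by Algorithm~\ref{alg:procrustes}; this gives $\min_{\bm{Q}\in\mathcal{O}_D}\lVert\bm{Q}\bm{X}-\bm{Y}\rVert_F^2=\operatorname{tr}(\bm{G}_X)+\operatorname{tr}(\bm{G}_Y)-2\lVert\bm{Y}\bm{X}^\Tr\rVert_*$. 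Next I would rewrite $\lVert\bm{Y}\bm{X}^\Tr\rVert_*$: the squared singular values of $\bm{Y}\bm{X}^\Tr$ are the eigenvalues of $\bm{Y}\bm{G}_X\bm{Y}^\Tr$, whose nonzero part equals that of $\bm{G}_X\bm{G}_Y$ (since $\bm{P}\bm{R}$ and $\bm{R}\bm{P}$ share nonzero eigenvalues), and the same holds for $\bm{G}_X^{1/2}\bm{G}_Y^{1/2}$; hence $\lVert\bm{Y}\bm{X}^\Tr\rVert_*=\lVert\bm{G}_X^{1/2}\bm{G}_Y^{1/2}\rVert_*\ge\operatorname{tr}(\bm{G}_X^{1/2}\bm{G}_Y^{1/2})$, the last step because a square matrix's trace is at most its nuclear norm. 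Substituting and identifying the right-hand side as a square yields $\min_{\bm{Q}\in\mathcal{O}_D}\lVert\bm{Q}\bm{X}-\bm{Y}\rVert_F^2\le\operatorname{tr}(\bm{G}_X)+\operatorname{tr}(\bm{G}_Y)-2\operatorname{tr}(\bm{G}_X^{1/2}\bm{G}_Y^{1/2})=\lVert\bm{G}_X^{1/2}-\bm{G}_Y^{1/2}\rVert_F^2$.

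It then remains to control $\lVert\bm{G}_X^{1/2}-\bm{G}_Y^{1/2}\rVert_F$ by $\varepsilon$. Here I would invoke the Birman--Koplienko--Solomyak inequality, $\lVert\bm{A}^{1/2}-\bm{B}^{1/2}\rVert_F^2\le\lVert\bm{A}-\bm{B}\rVert_*$ for positive semidefinite $\bm{A},\bm{B}$ (one could equivalently go through the Powers--Størmer inequality), applied to $\bm{A}=\bm{G}_X$ and $\bm{B}=\bm{G}_Y$. Because $\Rank(\bm{G}_X)=\Rank(\bm{X})\le D$ and similarly for $\bm{G}_Y$, the difference $\bm{G}_X-\bm{G}_Y$ has rank at most $2D$, so $\lVert\bm{G}_X-\bm{G}_Y\rVert_*\le\sqrt{2D}\,\lVert\bm{G}_X-\bm{G}_Y\rVert_F\le\sqrt{2D}\,\varepsilon$ by Cauchy--Schwarz over its (at most $2D$) singular values. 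Chaining the inequalities gives $\min_{\bm{Q}\in\mathcal{O}_D}\lVert\bm{Q}\bm{X}-\bm{Y}\rVert_F^2\le\sqrt{2D}\,\varepsilon$, and taking square roots yields the claim.

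I expect the square-root perturbation step to be the main obstacle: the tempting estimate $\lVert\bm{G}_X^{1/2}-\bm{G}_Y^{1/2}\rVert_F\le\lVert\bm{G}_X-\bm{G}_Y\rVert_F$ is \emph{false}, since $t\mapsto\sqrt{t}$ is not Lipschitz near $0$, so one must pass through an inequality that trades a squared Frobenius norm for a nuclear norm — and it is precisely this trade, combined with the rank-$2D$ count, that produces the factor $\sqrt{2D}$ and the exponent $1/4$ on $D$. A secondary point of care is the identity $\lVert\bm{Y}\bm{X}^\Tr\rVert_*=\lVert\bm{G}_X^{1/2}\bm{G}_Y^{1/2}\rVert_*$, which must be argued via matching nonzero eigenvalues because $\bm{X}$ and $\bm{Y}$ need not be square; the Procrustes value formula and the final Cauchy--Schwarz step are routine.
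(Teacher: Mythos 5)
Your proof is correct, and while it shares the paper's backbone---bound the Procrustes distance by $\lVert \lvert\bm{X}\rvert - \lvert\bm{Y}\rvert \rVert_F$ with $\lvert\bm{X}\rvert \doteq (\bm{X}^\Tr\bm{X})^{1/2}$, apply a Powers--St{\o}rmer-type inequality, and pay for the rank (at most $2D$) of the relevant difference---it reaches that backbone by a genuinely different route in two places. First, the paper proves the intermediate inequality $\min_{\bm{Q}\in\mathcal{O}_D}\lVert\bm{QX}-\bm{Y}\rVert_F \le \lVert\lvert\bm{X}\rvert-\lvert\bm{Y}\rvert\rVert_F$ constructively via Lemma~\ref{lem:rank}: it selects an $N\times N$ optimal rotation whose residual has rank at most $D$, rotates that residual into the first $D$ rows, and factors the resulting $D\times N$ matrices as $\bm{UX}$ and $\bm{VY}$. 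You obtain the same inequality purely algebraically from the closed-form optimum $\min_{\bm{Q}}\lVert\bm{QX}-\bm{Y}\rVert_F^2=\operatorname{tr}(\bm{X}^\Tr\bm{X})+\operatorname{tr}(\bm{Y}^\Tr\bm{Y})-2\lVert\bm{YX}^\Tr\rVert_1$ combined with $\lVert\bm{YX}^\Tr\rVert_1=\lVert\lvert\bm{X}\rvert\lvert\bm{Y}\rvert\rVert_1\ge\operatorname{tr}(\lvert\bm{X}\rvert\lvert\bm{Y}\rvert)$, which bypasses the subspace-selection construction (and the Frobenius-specific Lemma~\ref{lem:rank}) entirely; your eigenvalue-matching argument for the identity $\lVert\bm{YX}^\Tr\rVert_1=\lVert\lvert\bm{X}\rvert\lvert\bm{Y}\rvert\rVert_1$ is sound. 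Second, the rank enters at a different point: the paper applies Lemma~\ref{lem:psk} with $p=2$ and converts the Schatten-$4$ norm of $\lvert\bm{X}\rvert-\lvert\bm{Y}\rvert$ (rank $\le 2D$) to Frobenius, whereas you use the classical $p=1$ case, $\lVert\lvert\bm{X}\rvert-\lvert\bm{Y}\rvert\rVert_F^2\le\lVert\bm{X}^\Tr\bm{X}-\bm{Y}^\Tr\bm{Y}\rVert_1$, and convert the nuclear norm of the Gram difference (also rank $\le 2D$) to Frobenius by Cauchy--Schwarz. These are dual uses of the same rank budget and yield the identical constant $(2D)^{1/4}$, which the paper's Appendix~\ref{app:tightness} shows is tight; what your route buys is a shorter, more self-contained derivation of the key intermediate inequality, while the paper's Lemma~\ref{lem:rank} route gives extra structural information (an explicit optimal rotation with low-rank residual) that is of some independent interest.
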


\begin{proof}[Proof (sketch).]
The key idea is to identify a suitable canonical factorization of the Gram matrix $\bm{X}^\Tr \bm{X}$.
We find that the matrix absolute value $\lvert \bm{X} \rvert \doteq (\bm{X}^\Tr \bm{X})^{1/2}$ provides the appropriate notion.
An extension of the Powers-St{\o}rmers inequality \citep{kittaneh1986inequalities} allows us to bound $\lVert \lvert \bm{X} \rvert - \lvert \bm{Y} \rvert \rVert_F^2$ as a function of $\lVert \bm{X}^\Tr \bm{X} - \bm{Y}^\Tr \bm{Y} \rVert_F$.
With some more work, we show how to bound $\min_{\bm{Q} \in \mathcal{O}_D} \lVert \bm{QX} - \bm{Y} \rVert_F$ as a function of $\lVert \lvert \bm{X} \rvert - \lvert \bm{Y} \rvert \rVert_F$.
The full proof is provided in Appendix~\ref{app:proof}.
\end{proof}

Intuitively, the condition $\lVert \bm{X}^\Tr \bm{X} - \bm{Y}^\Tr \bm{Y} \rVert_F \le \varepsilon$ measures how closely dot products are preserved across $\bm{X}$ and $\bm{Y}$.
Theorem~\ref{thm:main} shows that this stability of dot products translates directly into stability under alignment: the optimal orthogonal transformation mapping $\bm{X}$ close to $\bm{Y}$ has alignment error at most $O(\sqrt{\varepsilon})$.
In particular, small deviations in dot products guarantee small distances between corresponding vectors once they are aligned.
The dependence of Theorem~\ref{thm:main} on both $D$ and $\varepsilon$ is tight, and in Appendix~\ref{app:tightness} we provide an explicit example that achieves equality.
The next corollary reformulates the theorem in terms of the average squared error in dot products, providing a measure of stability that is easier to interpret.

\begin{corollary}
\label{thm:clean}
Let $\bm{X}, \bm{Y} \in \mathbf{R}^{D \times N}$, and assume that $\mathbf{E}_{i, j} \left[ (\bm{x}_i^\Tr \bm{x}_j - \bm{y}_i^\Tr \bm{y}_j)^2 \right] \le \delta^2$.
Let $\bm{Q}^\star$ be the output of Algorithm~\ref{alg:procrustes}, and denote by $\bar{\bm{x}}_i \doteq \bm{Q}^\star \bm{x}_i$ the embedding aligned with $\bm{y}_i$.
Then,
\begin{align*}
\mathbf{E}_i \left[ \lVert \bar{\bm{x}}_i - \bm{y}_i \rVert^2 \right] \le \sqrt{2D} \delta.
\end{align*}
\end{corollary}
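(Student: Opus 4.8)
The plan is to reduce Corollary~\ref{thm:clean} to Theorem~\ref{thm:main} by translating the empirical averages into the Frobenius norms that appear there. First I would rewrite the hypothesis in matrix form. Since the $(i,j)$ entry of $\bm{X}^\Tr \bm{X}$ equals $\bm{x}_i^\Tr \bm{x}_j$, and likewise for $\bm{Y}^\Tr \bm{Y}$,
\begin{align*}
\lVert \bm{X}^\Tr \bm{X} - \bm{Y}^\Tr \bm{Y} \rVert_F^2
= \sum_{i=1}^N \sum_{j=1}^N \left( \bm{x}_i^\Tr \bm{x}_j - \bm{y}_i^\Tr \bm{y}_j \right)^2
= N^2 \, \mathbf{E}_{i, j}\!\left[ \left( \bm{x}_i^\Tr \bm{x}_j - \bm{y}_i^\Tr \bm{y}_j \right)^2 \right]
\le N^2 \delta^2,
\end{align*}
so the hypothesis of Theorem~\ref{thm:main} holds with $\varepsilon = N \delta$.

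Next I would invoke Theorem~\ref{thm:main} to obtain $\min_{\bm{Q} \in \mathcal{O}_D} \lVert \bm{Q}\bm{X} - \bm{Y} \rVert_F \le (2D)^{1/4} \sqrt{N \delta}$. By construction (Algorithm~\ref{alg:procrustes} and \citet{schoenemann1966generalized}), $\bm{Q}^\star$ is an exact minimizer of the orthogonal Procrustes objective over $\mathcal{O}_D$, and hence attains this bound; squaring gives $\lVert \bm{Q}^\star \bm{X} - \bm{Y} \rVert_F^2 \le \sqrt{2D}\, N \delta$.

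Finally I would expand the residual column by column. The $i$th column of $\bm{Q}^\star \bm{X} - \bm{Y}$ is $\bm{Q}^\star \bm{x}_i - \bm{y}_i = \bar{\bm{x}}_i - \bm{y}_i$, so $\lVert \bm{Q}^\star \bm{X} - \bm{Y} \rVert_F^2 = \sum_{i=1}^N \lVert \bar{\bm{x}}_i - \bm{y}_i \rVert^2 = N \, \mathbf{E}_i\!\left[ \lVert \bar{\bm{x}}_i - \bm{y}_i \rVert^2 \right]$. Combining this with the previous display and dividing by $N$ yields $\mathbf{E}_i[\lVert \bar{\bm{x}}_i - \bm{y}_i \rVert^2] \le \sqrt{2D}\, \delta$, as claimed.

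I do not anticipate any real obstacle: the corollary is essentially a change of units, rescaling both sides of the conclusion of Theorem~\ref{thm:main} by the sample size $N$ and re-expressing them as second moments. The only point that warrants care is checking that the $N$-dependence matches across the two sides — appearing linearly inside the square root on the Gram-matrix side and linearly in the squared residual on the alignment side — so that it cancels cleanly and the final bound is independent of $N$; all the genuine work is already contained in Theorem~\ref{thm:main}.
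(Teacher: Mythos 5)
Your proof is correct and is exactly the paper's argument, just written out in full: the paper's one-line proof also sets $\varepsilon = N\delta$ in Theorem~\ref{thm:main}, uses the entrywise/columnwise definition of the Frobenius norm to convert between empirical averages and norms, and relies on $\bm{Q}^\star$ being an exact minimizer from Algorithm~\ref{alg:procrustes}. No gaps; the $N$-dependence cancels exactly as you describe.
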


\begin{proof}
Setting $\varepsilon = N \delta$ in Theorem~\ref{thm:main} and using the definition of the Frobenius norm gives the result, since $\bm{Q}^\star \in \Argmin_{\bm{Q} \in \mathcal{O}_D} \lVert \bm{Q}\bm{X} - \bm{Y} \rVert_F$.
\end{proof}

Finally, an important special case arises when embeddings are normalized, i.e., $\lVert \bm{x}_i \rVert = \lVert \bm{y}_i \rVert = 1$, so that dot products coincide with cosine similarities.
In this setting, we can also bound the deviation of cross-similarities $\bar{\bm{x}}_i^\Tr \bm{y}_j$ with respect to both $\bm{y}_i^\Tr \bm{y}_j$ and $\bm{x}_i^\Tr \bm{x}_j$.

\begin{corollary}
\label{thm:angles}
Let $\bm{X}, \bm{Y} \in \mathbf{R}^{D \times N}$ be embedding matrices with unit-norm columns, and assume that $\mathbf{E}_{i, j} \left[ (\bm{x}_i^\Tr \bm{x}_j - \bm{y}_i^\Tr \bm{y}_j)^2 \right] \le \delta^2$.
Let $\bm{Q}^\star$ be the output of Algorithm~\ref{alg:procrustes}, and denote by $\bar{\bm{x}}_i \doteq \bm{Q}^\star \bm{x}_i$ the embedding aligned with $y_i$.
Then,
\begin{align*}
\mathbf{E}_{i, j} \left[ \lVert \bar{\bm{x}}_i^\Tr \bm{y}_j - \bm{y}_i^\Tr \bm{y}_j \rVert^2 \right]
    &\le \sqrt{2D} \delta,
&\mathbf{E}_{i, j} \left[ \lVert \bar{\bm{x}}_i^\Tr \bm{y}_j - \bm{x}_i^\Tr \bm{x}_j \rVert^2 \right]
    &\le \sqrt{2D} \delta.
\end{align*}
\end{corollary}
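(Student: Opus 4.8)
The plan is to derive both inequalities from Corollary~\ref{thm:clean} using only the Cauchy--Schwarz inequality together with two elementary facts: the columns of $\bm{X}$ and $\bm{Y}$ are unit vectors, and $\bm{Q}^\star$ is an isometry. No new machinery should be needed; the work is entirely in rewriting the cross-similarity residuals so that Corollary~\ref{thm:clean} applies.

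For the first bound, I would write $\bar{\bm{x}}_i^\Tr \bm{y}_j - \bm{y}_i^\Tr \bm{y}_j = (\bar{\bm{x}}_i - \bm{y}_i)^\Tr \bm{y}_j$ and apply Cauchy--Schwarz to get $\lvert (\bar{\bm{x}}_i - \bm{y}_i)^\Tr \bm{y}_j \rvert \le \lVert \bar{\bm{x}}_i - \bm{y}_i \rVert \, \lVert \bm{y}_j \rVert = \lVert \bar{\bm{x}}_i - \bm{y}_i \rVert$, using $\lVert \bm{y}_j \rVert = 1$. Squaring and taking the double average over $i,j$, the right-hand side no longer depends on $j$, so $\mathbf{E}_{i,j}[(\bar{\bm{x}}_i^\Tr \bm{y}_j - \bm{y}_i^\Tr \bm{y}_j)^2] \le \mathbf{E}_i[\lVert \bar{\bm{x}}_i - \bm{y}_i \rVert^2] \le \sqrt{2D}\,\delta$ by Corollary~\ref{thm:clean}.

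For the second bound, the idea is to push the orthogonal transformation onto the target vectors. Writing $\bar{\bm{x}}_i^\Tr \bm{y}_j = \bm{x}_i^\Tr (\bm{Q}^\star)^\Tr \bm{y}_j$ and setting $\bar{\bm{y}}_j \doteq (\bm{Q}^\star)^\Tr \bm{y}_j$, we get $\bar{\bm{x}}_i^\Tr \bm{y}_j - \bm{x}_i^\Tr \bm{x}_j = \bm{x}_i^\Tr (\bar{\bm{y}}_j - \bm{x}_j)$, and Cauchy--Schwarz with $\lVert \bm{x}_i \rVert = 1$ bounds this in absolute value by $\lVert \bar{\bm{y}}_j - \bm{x}_j \rVert$. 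Since $\bm{Q}^\star$ preserves distances, $\lVert \bar{\bm{y}}_j - \bm{x}_j \rVert = \lVert \bm{Q}^\star(\bar{\bm{y}}_j - \bm{x}_j) \rVert = \lVert \bm{y}_j - \bar{\bm{x}}_j \rVert$; squaring and averaging over $i,j$ leaves only a dependence on $j$, and Corollary~\ref{thm:clean} again gives $\mathbf{E}_j[\lVert \bm{y}_j - \bar{\bm{x}}_j \rVert^2] \le \sqrt{2D}\,\delta$.

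The only step requiring a moment of care is the identity $\lVert \bm{Q}^\star \bm{X} - \bm{Y} \rVert_F = \lVert \bm{X} - (\bm{Q}^\star)^\Tr \bm{Y} \rVert_F$ implicitly used in the second part --- i.e., that transferring $\bm{Q}^\star$ to the other side does not change the residual. This follows immediately from $\bm{Q}^\star$ being an isometry, but if one prefers it is also visible from Algorithm~\ref{alg:procrustes}: if $\bm{U}\bm{\Sigma}\bm{V}^\Tr$ is an SVD of $\bm{Y}\bm{X}^\Tr$, then $\bm{V}\bm{\Sigma}\bm{U}^\Tr$ is an SVD of $\bm{X}\bm{Y}^\Tr$, so $(\bm{Q}^\star)^\Tr = \bm{V}\bm{U}^\Tr$ is precisely the Procrustes solution aligning $\bm{Y}$ to $\bm{X}$. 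Everything else is routine.
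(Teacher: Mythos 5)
Your proof is correct and follows essentially the same route as the paper: Cauchy--Schwarz together with the unit-norm assumption reduces each cross-similarity residual to $\lVert \bar{\bm{x}}_i - \bm{y}_i \rVert$, and Corollary~\ref{thm:clean} finishes the job, with the second inequality handled via the identity $\bar{\bm{x}}_i^\Tr \bm{y}_j = \bm{x}_i^\Tr \bar{\bm{y}}_j$, exactly as in the paper's ``exchange $\bm{X}$ and $\bm{Y}$'' step. The only (cosmetic) differences are that you apply Cauchy--Schwarz vector-wise rather than at the level of Frobenius norms, and that you make explicit the isometry identity $\lVert (\bm{Q}^\star)^\Tr\bm{y}_j - \bm{x}_j \rVert = \lVert \bm{y}_j - \bar{\bm{x}}_j \rVert$, which the paper leaves implicit.
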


\begin{proof}
For the first result, we have
\begin{align*}
\mathbf{E}_{i, j} \left[ \lVert \bar{\bm{x}}_i^\Tr \bm{y}_j - \bm{y}_i^\Tr \bm{y}_j \rVert^2 \right]
    &= (1 / N^2) \lVert (\bm{Q}^\star \bm{X})^\Tr \bm{Y} - \bm{Y}^\Tr \bm{Y} \rVert_F^2
     = (1 / N^2) \lVert (\bm{Q}^\star \bm{X} - \bm{Y})^\Tr \bm{Y} \rVert_F^2 \\
    &\le (1 / N^2) \lVert \bm{Q}^\star \bm{X} - \bm{Y} \rVert_F^2 \lVert \bm{Y} \rVert_F^2
     \le \sqrt{2D} \delta,
\end{align*}
where the first inequality follows from the Cauchy-Schwarz inequality, and the second inequality follows from Corollary~\ref{thm:clean} and from the fact that, since $\bm{Y}$ has unit-norm columns, $\lVert \bm{Y} \rVert_F^2 = N$.
The second result follows from the first by exchanging $\bm{X}$ and $\bm{Y}$ and noticing that $\bar{\bm{x}}_i^\Tr \bm{y}_j = \bm{x}_i^\Tr \bar{\bm{y}}_j$.
\end{proof}

\paragraph{Comparison to prior work.}
We briefly contrast our result with those of \citet{tu2016low} and \citet{ariascastro2020perturbation}.
Under the additional assumption that $\bm{X}$ has full rank, they bound $\min_{\bm{Q} \in \mathcal{O}_D} \lVert \bm{Q}\bm{X} - \bm{Y} \rVert_F$ by $\sigma_{\min}^{-1}\varepsilon$ (up to a constant factor), where $\sigma_{\min}$ is the smallest singular value of $\bm{X}$.
In contrast to theirs, our bound is entirely data-independent.
Moreover, the setting most relevant to our applications is $\varepsilon = N\delta$ with $\delta$ fixed and small but $N$ large, in which case typically $\sqrt{\varepsilon} \ll \varepsilon$ and our bound is tighter.
This is highlighted in the framing of Corollary~\ref{thm:clean} where our bound is independent of $N$, whereas their bound scales as $O(N \delta^2)$.
In a different line of work, \citet{harvey2024what} prove a bound similar to ours.
Their result, however, applies only to centered embedding matrices ($\mathbf{E}_i[\bm{x}_i] = \mathbf{E}_i[\bm{y}_i] = \bm{0}$).
By contrast, our bound does not required centered embeddings.

\section{Experimental evaluation}
\label{sec:eval}

In this section, we take an empirical perspective.
We investigate the effectiveness of orthogonal Procrustes across three practical applications where distinct embedding models need to be aligned:
Model retraining (Sec.~\ref{sec:movielens}), partial upgrades (Sec.~\ref{sec:text}), and mixed-modality search (Sec.~\ref{sec:multimodal}).

\subsection{Maintaining compatibility across retrainings}
\label{sec:movielens}

In some representation learning applications, it is standard practice to periodically retrain embedding models on fresh data in order to capture concept drift, i.e., evolving relationships between objects.
To study this setting, we consider the MovieLens-25M dataset, which consists of 25M movie ratings and associated genre metadata from an online recommender system \citep{harper2015movielens}.
We train low-dimensional user and item embeddings using a BPR matrix factorization model that predicts positive movie ratings \citep{rendle2009bpr}.
Details of training and hyperparameter selection are provided in Appendix~\ref{app:movielens}.
Successive model versions are obtained by training on data consisting of all ratings in the six months preceding a given month $t$, for \num{4} consecutive months.
This setup mirrors a realistic scenario in which production recommender systems are retrained on a regular cadence.

Matrix factorization models are invariant to orthogonal transformations.
Consequently, successive versions of the embeddings are misaligned by default, which poses challenges for downstream systems that consume embeddings as input.
Such systems must either be retrained synchronously with the embedding model (a stringent and often impractical requirement) or the embeddings must be made interoperable across versions.

Orthogonal Procrustes post-processing provides a simple and attractive solution to this problem \citep{zielnicki2025orthogonal}.
By aligning embeddings from version $t$ to those of a fixed reference version $t_0$, we obtain interoperability across retrainings without modifying the training objective or distorting the geometry of individual embedding spaces.
We compare this approach against several alternatives.

\begin{description}
\item[Warmstart.] Initialize embeddings of version $t$ with those of version $t_0$.

\item[Autoencoding loss.] Add a regularization term penalizing squared distances between the embeddings of version $t$ and $t_0$ \citep{elkishky2022twhin}.

\item[BC-Aligner.] The method of \citet{hu2022learning}, which jointly learns embeddings and a linear transformation aligning embeddings from $t$ to $t_0$ during training.

\item[Linear.] Post-process the embeddings with the best-fitting linear transformation.
Relaxing the orthogonality constraint allows improved alignment but sacrifices geometry preservation.
\end{description}

All of the competing methods introduce inductive biases, either through modifications to the loss function or by altering the geometry of the embedding space after training.
Orthogonal Procrustes is unique that it does not introduce any additional inductive biases.

\paragraph{Similar movies retrieval.}
We first evaluate alignment methods on a similar-movie retrieval task.
We select the \num{5000} movies with the most ratings.
For each movie $i$ in an embedding space $\bm{X}$ corresponding to $t > t_0$, we rank all other movies by decreasing dot product $\bm{x}_i^\Tr \bm{x}_j$ and record the top-100 most similar movies.
Given the reference embeddings $\bm{Y}$ from $t_0$, and aligned embeddings $\bar{\bm{X}}$ from $t$, we approximate similarity as $\bar{\bm{x}}_i^\Tr \bm{y}_j$ and report recall@100.
Figure~\ref{fig:movielens} (\emph{left}) shows the results.
As expected, unaligned embeddings fail to recover similar movies.
Orthogonal Procrustes achieves the best performance among alignment methods, likely owing to the fact that $\bar{\bm{X}}$ preserves the geometry of $\bm{X}$ exactly.

\begin{figure}[t]
  \centering
  \includegraphics{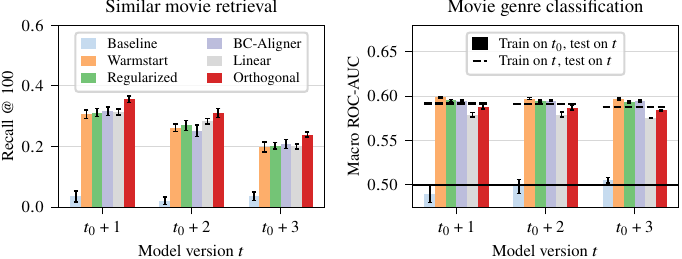}
  \caption{%
Retraining experiments on the MovieLens dataset.
Models trained on embeddings from version $t_0$ are combined with embeddings from version $t > t_0$.
}
  \label{fig:movielens}
\end{figure}

\paragraph{Movie genre prediction.}
We also evaluate a downstream classification task: predicting the genres of a movie from its embedding.
To this end, we partition the movies into training and test sets.
For each genre, we train a binary logistic regression classifier on embeddings from version $t_0$.
We then evaluate the these classifiers on embeddings from version $t$ of the movies in the test set.
Figure~\ref{fig:movielens} (\emph{right}) presents the area under the ROC curve averaged over the 19 genres (macro ROC-AUC).
Focusing on the two post-processing methods, we observe that orthogonal alignment outperforms linear alignment.
Interestingly, the three methods that modify the training procedure outperform
\begin{enuminline}
\item both post-processing methods and
\item classifiers retrained on embeddings of version $t$,
\end{enuminline}
indicating that the inductive biases introduced by these approaches can improve embedding quality beyond the alignment problem itself---a subtle point that is beyond the scope of our work.

\subsection{Combining different models for text retrieval}
\label{sec:text}

Next, we consider a text retrieval application in which documents and queries are embedded with different models.
This scenario arises when document embeddings are fixed and cannot be recomputed, e.g., because the raw documents are unavailable \citep{morris2023text, huang2024transferable}, but the query embedding model can be updated.
Our main question is: Can retrieval performance be improved by upgrading the query embedding model, provided that embeddings are aligned?

We evaluate on three tasks from the retrieval subset of the MMTEB benchmark \citep{enevoldsen2025mmteb}, summarized in Table~\ref{tab:text-tasks} in Appendix~\ref{app:text}.
Each of the three datasets (HotpotQA-HN, FEVER-HN, and TREC-COVID) consists of a corpus of text documents and a set of queries with ground-truth relevance labels.
For each query, documents are ranked by the dot product between query and document embeddings.
Performance is measured with the normalized discounted cumulative gain of the top-ten retrieved documents (nDCG@10).

We consider seven text embedding models publicly available on HuggingFace\footnote{%
See: \url{https://huggingface.co/models?pipeline_tag=sentence-similarity}.}, varying in number of parameters, dimensionality, training objective, and release date.
Several models are trained with Matryoshka representation learning \citep{kusupati2022matryoshka}, which enables truncation of embeddings at test time to trade accuracy for computational cost.
Figure~\ref{fig:text-distances} (\emph{left}) visualizes the models using the first two principal coordinates of the pairwise Procrustes distance matrix, computed on FEVER-HN document embeddings.
Figure~\ref{fig:text-distances} (\emph{right}) plots normalized Procrustes distance against dot-product preservation across all 21 model pairs.
Empirically, the distances remain well below our theoretical worst-case bound and appear to approximately follow the power-law trend suggested by theory.

\begin{figure}[t]
  \centering
  \includegraphics{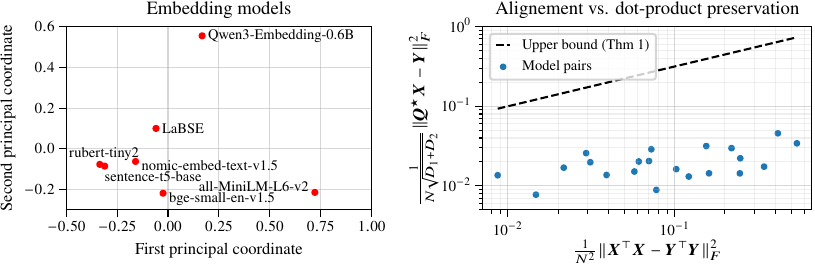}
  \caption{%
Two-dimensional representation of the text embedding models reflecting approximate Procrustes distances (\emph{left}).
Normalized Procrustes distance vs. geometry-preservation for all \num{21} pairwise model combinations (\emph{right}).
}
  \label{fig:text-distances}
\end{figure}

For each ordered pair of models, we learn an orthogonal transformation $\bm{Q}^\star$ by sampling \num{10000} documents uniformly at random from the corpus and solving the orthogonal Procrustes problem~\eqref{eq:procrust}.
When models have different dimensionalities, we pad the smaller embeddings with zeros, thus preserving their original geometry.
We then embed all documents with the first model and all queries with the second model, and evaluate retrieval performance in two settings,
\begin{enuminline}
\item using raw query embeddings (no alignment), and
\item aligning query embeddings with $\bm{Q}^\star$ before retrieval.
\end{enuminline}
Figure~\ref{fig:text-retrieval} reports nDCG@10 for all 49 model pairs on the three tasks.
Note that models are arranged in decreasing order of baseline performance.
Without alignment, cross-model retrieval fails almost completely.
After alignment, retrieval becomes feasible across models, and in two of the three tasks, upgrading to a stronger query model can yield substantial performance gains.
In particular, the lower triangles in Figure~\ref{fig:text-retrieval} (\emph{bottom}) show that replacing a weak query encoder with a stronger one, while keeping document embeddings fixed, can sometimes dramatically improve retrieval performance.

\begin{figure}[t]
  \centering
  \includegraphics{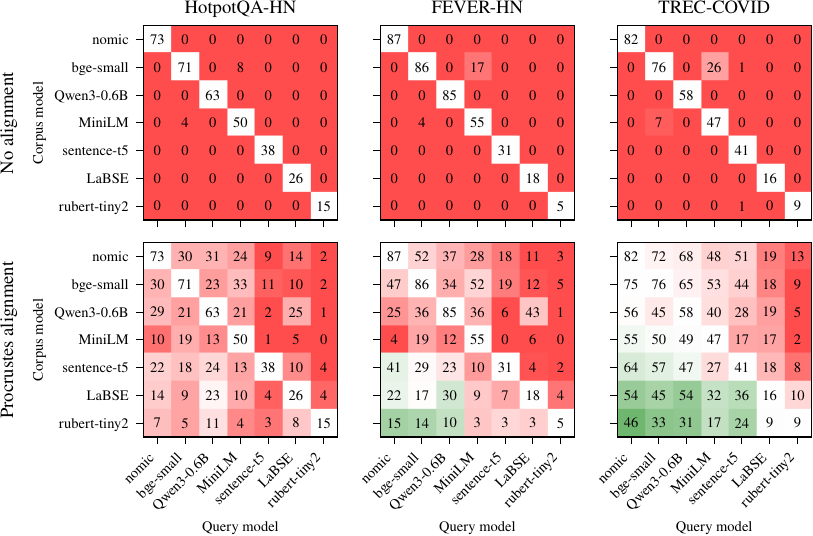}
  \caption{%
Retrieval performance (nDCG@10) for all query--document model combinations.
\emph{Top:} raw embeddings.
\emph{Bottom:} query embeddings aligned with orthogonal Procrustes.
Diagonal entries correspond to the baseline case where the same model is used for both queries and documents.
}
  \label{fig:text-retrieval}
\end{figure}

\paragraph{Does the orthogonality constraint help?}
We compare Procrustes alignment with an unconstrained linear alignment matrix $\bm{A}^\star$ that minimizes the Frobenius error without enforcing orthogonality.
By construction, the unconstrained solution cannot perform worse in terms of alignment error, as $\min_{\bm{A} \in \mathbf{R}^{D \times D}} \lVert \bm{AX} - \bm{Y} \rVert_F \le \min_{\bm{Q} \in \mathcal{O}}\lVert \bm{QX} - \bm{Y} \rVert_F$.
However, as shown in Figure~\ref{fig:text-linear}, orthogonal alignment consistently outperforms linear, especially when upgrading to a stronger query model.
This suggests that preserving the geometry of the stronger source model retains useful information that would otherwise be lost by unconstrained linear alignment.
Conversely, when downgrading to a weaker query model (upper triangles), unconstrained alignment can help, but this case is less realistic.

\begin{figure}[t]
  \centering
  \includegraphics{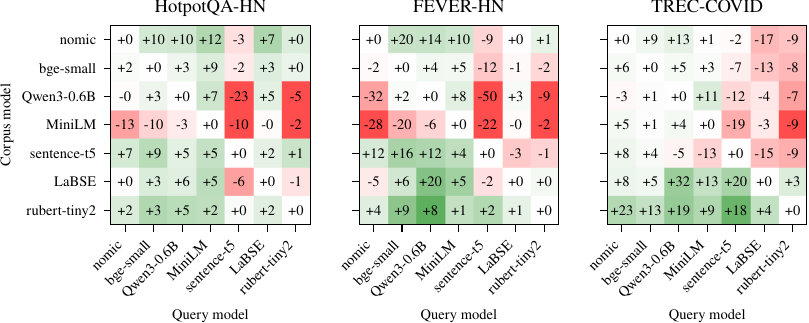}
  \caption{%
Difference in nDCG@10 between orthogonal Procrustes and unconstrained linear alignment.
Positive values indicate orthogonal Procrustes performs better.
}
  \label{fig:text-linear}
\end{figure}

\paragraph{How many samples are needed to learn $\bm{Q}^\star$?}
In order to learn the alignment matrix, we require a sample of texts embedded with both source and target embedding models.
Figure~\ref{fig:text-samplecomplex-hotpotqa} shows Procrustes distance and retrieval performance as a function of the number of training samples.
For the models we consider, performance gains appear to saturate after roughly \num{10000} samples, indicating relatively modest sample requirements for reliable alignment.

\begin{figure}[t]
  \centering
  \includegraphics{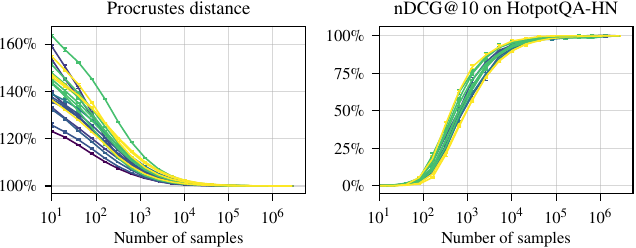}
  \caption{%
Performance vs. number of samples used to estimate $\bm{Q}^\star$ across 21 model pairs, normalized by full-sample performance on HotpotQA.
Brighter colors indicate more free parameters in $\bm{Q}^\star$.
}
  \label{fig:text-samplecomplex-hotpotqa}
\end{figure}

In Appendix~\ref{app:text}, we further analyze alignment matrices between models trained with Matryoshka representation learning (MRL). 
MRL encourages representations in which the leading dimensions capture most of the semantic variability. 
Consistent with this property, we find that $\bm{Q}^\star$ between two Matryoshka models typically aligns the first \num{16}--\num{32} dimensions of one embedding space with the corresponding leading dimensions of the other.

\subsection{Improving mixed-modality search}
\label{sec:multimodal}

Lastly, we consider an application of Procrustes post-processing to multimodal embedding models.
Models such as CLIP and SigLIP train text and image encoders into a shared embedding space, enabling cross-modal retrieval \citep{wang2016comprehensive}.
This allows, e.g., retrieving the most relevant images given a text query via dot-product comparisons, as in Section~\ref{sec:text}.
Unlike the previous applications, the text and image encoders are jointly trained and therefore nominally aligned.
However, these models exhibit a persistent \emph{modality gap}: embeddings cluster by modality in disjoint regions of $\mathbf{R}^D$ \citep{liang2022mind}.
This modality gap hinders comparisons between heterogeneous modalities, such as ranking a text-only and an image-only document with respect to a text query.
This setting is known as mixed-modality search.

To systematically evaluate retrieval performance in this setting, \citet{li2025closing} introduced the MixBench benchmark, building on four well-known multimodal text--image datasets.
For each query (text in most subsets, or an image and a question in the OVEN subset), the goal is to retrieve the most relevant documents, which can be
\begin{enuminline}
\item image-only,
\item text-only, or
\item an image--text pair.
\end{enuminline}
Embeddings for queries or documents that combine image and text are formed as weighted combinations: $\bm{x} = \alpha \bm{x}_{\text{text}} + (1 - \alpha) \bm{x}_{\text{image}}$, where $\alpha$ is a hyperparameter.
In their work, \citeauthor{li2025closing} demonstrate that simply centering and renormalizing the underlying text and image embeddings significantly improves retrieval, establishing the state of the art on MixBench.
Note that centering modifies the dot-product distributions but does not explicitly align modalities (c.f. Appendix~\ref{app:multimodal}).
We hypothesize that explicitly aligning the embeddings of different modalities with orthogonal Procrustes can further improve the performance.
We thus compare four variants,
\begin{enuminline}
\item baseline (original unprocessed embeddings),
\item orthogonal alignment only,
\item centering only, and
\item orthogonal alignment followed by centering.
\end{enuminline}

In Figure~\ref{fig:multimodal-retrieval}, we report results on four multimodal embedding models.
In these experiments, mean embeddings and alignment matrices are learned on held-out data derived from MixBench's upstream datasets.
For mixed-modality queries and documents, we use $\alpha = 0.5$; results for a range of other values of $\alpha$ (presented in Appendix~\ref{app:multimodal}) confirm the same qualitative trends.
Across all subsets of MixBench and nearly all models, Procrustes post-processing improves mixed-modality retrieval.
Orthogonal alignment alone outperforms the original unprocessed embeddings, while the combination of alignment and centering yields the best overall performance, consistently outperforming centering alone.

\begin{figure}[t]
  \centering
  \includegraphics{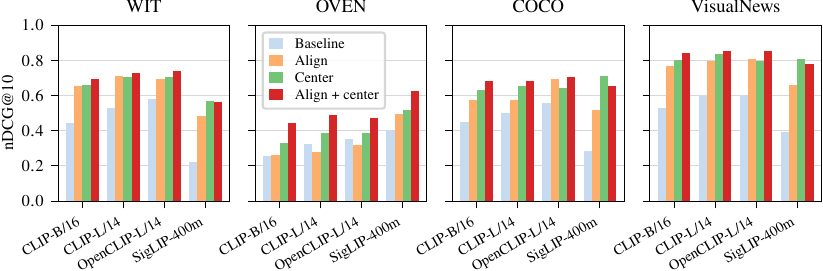}
  \caption{%
Retrieval performance (nDCG@10) on the four MixBench subsets.
We evaluate four multimodal embedding models under different post-processing methods.
}
  \label{fig:multimodal-retrieval}
\end{figure}

\section{Conclusion \& future work}

We have shown that approximate dot-product preservation implies that two embedding models can be closely aligned by an orthogonal transformation, providing a principled justification for Procrustes alignment.
Beyond this theoretical insight, we have demonstrated that Procrustes post-processing effectively addresses several practical challenges, including model retraining, partial upgrades, and multimodal search.
These results highlight the growing importance of embedding alignment as machine learning systems increasingly interact in complex pipelines.

In future work, we plan to investigate alignment across modalities more deeply.
\citet{liang2022mind} show that the modality gap exists even at random initialization; we hypothesize that aligning representations at the start of training could improve optimization.
More generally, we envision developing an alignment layer, similarly to normalization layers \citep{ioffe2015batch, zhang2019root}, to make embedding interoperability a standard component of representation learning.

\subsubsection*{Acknowledgments}

We thank Gilbert Maystre for discussions that led to the proof of Lemma~\ref{lem:rank}.

\bibliographystyle{iclr2026_conference}
\bibliography{alignembed}

\clearpage
\appendix
\section{Proofs and additional theory}

In Section~\ref{app:proof}, we provide a complete proof of Theorem~\ref{thm:main}.
In Section~\ref{app:tightness}, we provide a concrete example of a pair of embedding matrices that achieves the upper bound.

\subsection{Proof of Theorem~\ref{thm:main}}
\label{app:proof}

The Frobenius norm is a special case of the Schatten $p$-norm, which we will also make use of.
Let $\bm{A} = [a_{ij}]$ be a real-valued matrix of rank $r$, with non-zero singular values $\sigma_1(\bm{A}) \ge \cdots \ge \sigma_r(\bm{A})$.
For $p \in [1, \infty)$, the Schatten $p$-norm is defined as
\begin{align*}
\textstyle
\lVert \bm{A} \rVert_p = \left( \sum_{i=1}^r \sigma_i(\bm{A})^p \right)^{1/p}.
\end{align*}
For $p = \infty$, it is defined as $\lVert \bm{A} \rVert_\infty= \sigma_1(\bm{A})$, in which case it coincides with the operator norm.
We recover the Frobenius norm by setting $p = 2$.

In order to prove our bound, we will rely on a result first obtained by \citet[Lemma~4.1]{powers1970free} in the case $p = 1$, and later extended to any $p$ by \citet[Corollary~2]{kittaneh1986inequalities}.
\begin{lemma}[Powers-St{\o}rmers-Kittaneh Inequality]
\label{lem:psk}
Let $\bm{A}, \bm{B} \in \mathbf{R}^{N \times N}$ be positive semi-definite.
Then,
\begin{align*}
\lVert \bm{A} - \bm{B} \rVert^2_{2p} \le \lVert \bm{A}^2 - \bm{B}^2 \rVert_p.
\end{align*}
\end{lemma}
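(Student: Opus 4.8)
The plan is to reduce the statement to the special case where $\bm{A} - \bm{B}$ has a definite sign, say $\bm{A} \succeq \bm{B} \succeq \bm{0}$, and then to dispatch that case by Schatten-norm duality. For the reduction I would decompose $\bm{A} - \bm{B} = \bm{S}_+ - \bm{S}_-$ into positive and negative parts and let $\bm{P}_+, \bm{P}_-$ be the corresponding (mutually orthogonal) spectral projections of $\bm{A} - \bm{B}$, with $\bm{P}_0 \doteq \bm{I} - \bm{P}_+ - \bm{P}_-$. The workhorse is the anticommutator identity
\begin{align*}
\bm{A}^2 - \bm{B}^2 = \tfrac12\big[(\bm{A}+\bm{B})(\bm{A}-\bm{B}) + (\bm{A}-\bm{B})(\bm{A}+\bm{B})\big],
\end{align*}
which, together with $(\bm{A}-\bm{B})\bm{P}_+ = \bm{P}_+(\bm{A}-\bm{B}) = \bm{S}_+$, should give the compression identity $\bm{P}_+(\bm{A}^2-\bm{B}^2)\bm{P}_+ = \hat{\bm{A}}^2 - \hat{\bm{B}}^2$, where $\hat{\bm{A}} \doteq \bm{P}_+\bm{A}\bm{P}_+$ and $\hat{\bm{B}} \doteq \bm{P}_+\bm{B}\bm{P}_+$ satisfy $\hat{\bm{A}} \succeq \hat{\bm{B}} \succeq \bm{0}$ (since $\hat{\bm{A}} - \hat{\bm{B}} = \bm{P}_+(\bm{A}-\bm{B})\bm{P}_+ = \bm{S}_+$), and the analogous statement holds on $\bm{P}_-$ with $\bm{A}$ and $\bm{B}$ interchanged. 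The pinching inequality $\lVert \bm{P}_+\bm{M}\bm{P}_+ + \bm{P}_-\bm{M}\bm{P}_- + \bm{P}_0\bm{M}\bm{P}_0 \rVert_p \le \lVert \bm{M}\rVert_p$ applied to $\bm{M} = \bm{A}^2-\bm{B}^2$, together with the fact that the three summands are block-diagonal, then yields $\lVert \bm{A}^2-\bm{B}^2\rVert_p^p \ge \lVert \hat{\bm{A}}^2 - \hat{\bm{B}}^2\rVert_p^p + \lVert \bm{P}_-(\bm{A}^2-\bm{B}^2)\bm{P}_-\rVert_p^p$; since $\lvert\bm{A}-\bm{B}\rvert = \bm{S}_+ + \bm{S}_-$ is likewise block-diagonal, $\lVert \bm{A}-\bm{B}\rVert_{2p}^{2p} = \lVert\bm{S}_+\rVert_{2p}^{2p} + \lVert\bm{S}_-\rVert_{2p}^{2p}$. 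Hence it suffices to prove the bound assuming $\bm{A}\succeq\bm{B}\succeq\bm{0}$.

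In that case I would set $\bm{Y} \doteq \bm{A}-\bm{B}\succeq\bm{0}$ and $\bm{C} \doteq \bm{B}\succeq\bm{0}$, so that $\bm{A}^2-\bm{B}^2 = \bm{Y}^2 + \bm{Y}\bm{C} + \bm{C}\bm{Y}$ and $\lVert\bm{A}-\bm{B}\rVert_{2p}^2 = \lVert\bm{Y}^2\rVert_p$; the goal becomes $\lVert\bm{Y}^2 + \bm{Y}\bm{C} + \bm{C}\bm{Y}\rVert_p \ge \lVert\bm{Y}^2\rVert_p$. Writing $q$ for the conjugate exponent of $p$, I take the dual element $\bm{W} \doteq \bm{Y}^{2(p-1)}/\lVert\bm{Y}^2\rVert_p^{p-1}$, which is positive semi-definite with $\lVert\bm{W}\rVert_q = 1$ and $\operatorname{tr}(\bm{Y}^2\bm{W}) = \lVert\bm{Y}^2\rVert_p$. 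Then Hölder's inequality for Schatten norms and cyclicity of the trace give
\begin{align*}
\lVert\bm{Y}^2 + \bm{Y}\bm{C} + \bm{C}\bm{Y}\rVert_p
\;\ge\; \operatorname{tr}\big((\bm{Y}^2 + \bm{Y}\bm{C} + \bm{C}\bm{Y})\,\bm{W}\big)
\;=\; \lVert\bm{Y}^2\rVert_p + \frac{2\operatorname{tr}(\bm{C}\,\bm{Y}^{2p-1})}{\lVert\bm{Y}^2\rVert_p^{p-1}}
\;\ge\; \lVert\bm{Y}^2\rVert_p,
\end{align*}
the last step because $\operatorname{tr}(\bm{C}\,\bm{Y}^{2p-1}) = \operatorname{tr}(\bm{C}^{1/2}\bm{Y}^{2p-1}\bm{C}^{1/2}) \ge 0$ is the trace of a product of positive semi-definite matrices. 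Raising the reduction inequality to the $p$-th power and summing over the two blocks then gives $\lVert\bm{A}^2-\bm{B}^2\rVert_p^p \ge \lVert\bm{S}_+\rVert_{2p}^{2p} + \lVert\bm{S}_-\rVert_{2p}^{2p} = \lVert\bm{A}-\bm{B}\rVert_{2p}^{2p}$, which is the claim.

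The hard part will be the reduction, not the duality step. One must verify the compression identity $\bm{P}_+(\bm{A}^2-\bm{B}^2)\bm{P}_+ = (\bm{P}_+\bm{A}\bm{P}_+)^2 - (\bm{P}_+\bm{B}\bm{P}_+)^2$ carefully — it fails term by term and hinges on $\bm{P}_+$ being a spectral projection of $\bm{A}-\bm{B}$, so that $\bm{P}_+$ commutes with $\bm{A}-\bm{B}$ — and one must track the block-diagonal structure through the Schatten norms. The duality argument, by contrast, is clean exactly because after the reduction $\bm{Y}$ is positive semi-definite, so the dual element $\bm{Y}^{2(p-1)}$ commutes with $\bm{Y}$ and the cross term in the trace has a fixed sign; for a general (indefinite) $\bm{A}-\bm{B}$ this fails, which is what forces the reduction. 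As sanity checks, $p=1$ recovers the original Powers--St{\o}rmers inequality, and in the operator-norm limit the whole argument collapses to evaluating $\langle\bm{v}, (\bm{A}(\bm{A}-\bm{B}) + (\bm{A}-\bm{B})\bm{B})\bm{v}\rangle = \lambda\langle\bm{v},(\bm{A}+\bm{B})\bm{v}\rangle \ge \lambda^2$ at a top eigenvector $\bm{v}$ of $\bm{A}-\bm{B}$ with eigenvalue $\lambda$.
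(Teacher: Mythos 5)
Your proposal is correct, but note that the paper never proves Lemma~\ref{lem:psk} at all: it imports the $p=1$ case from \citet{powers1970free} and the general case from \citet{kittaneh1986inequalities}, so there is no in-paper argument to compare against. Your derivation is a valid self-contained proof. Its overall shape --- split along the spectral projections of $\bm{A}-\bm{B}$, then settle the ordered case --- is the classical Powers--St{\o}rmer strategy, with the general-$p$ ordered case handled by a clean Schatten--H\"older duality argument rather than the norm inequalities used in Kittaneh's paper. The delicate steps check out: the compression identity $\bm{P}_+(\bm{A}^2-\bm{B}^2)\bm{P}_+ = (\bm{P}_+\bm{A}\bm{P}_+)^2 - (\bm{P}_+\bm{B}\bm{P}_+)^2$ does hold, exactly because $\bm{P}_+$ commutes with $\bm{A}-\bm{B}$, so $(\bm{A}-\bm{B})\bm{P}_+ = \bm{P}_+(\bm{A}-\bm{B})\bm{P}_+ = \bm{S}_+$ and both sides reduce to $\tfrac12\bigl[\bm{P}_+(\bm{A}+\bm{B})\bm{P}_+\bm{S}_+ + \bm{S}_+\bm{P}_+(\bm{A}+\bm{B})\bm{P}_+\bigr]$ via the anticommutator identity; the pinching bound together with block additivity of $\lVert\cdot\rVert_p^p$ and of $\lVert\bm{A}-\bm{B}\rVert_{2p}^{2p} = \lVert\bm{S}_+\rVert_{2p}^{2p}+\lVert\bm{S}_-\rVert_{2p}^{2p}$ gives the reduction; and in the ordered case the witness $\bm{W}=\bm{Y}^{2(p-1)}/\lVert\bm{Y}^2\rVert_p^{p-1}$ indeed has $\lVert\bm{W}\rVert_q=1$ and $\operatorname{tr}(\bm{Y}^2\bm{W})=\lVert\bm{Y}^2\rVert_p$, while the cross term $2\operatorname{tr}(\bm{C}\bm{Y}^{2p-1})\ge 0$ because $\bm{W}$ commutes with $\bm{Y}$ --- which is precisely where $\bm{A}\succeq\bm{B}\succeq\bm{0}$ is used, as you observe. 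Two trivial housekeeping points: treat $\bm{Y}=\bm{0}$ (and $p=1$, where $\bm{W}$ is a projection or the identity) separately, and note that your argument as written covers finite $p\in[1,\infty)$, which is all the paper needs (it applies the lemma with $p=2$); the operator-norm case is only covered by your separate eigenvector sanity check.
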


We also need another lemma that shows that the optimal alignment matrix aligns the subspaces spanned by the columns of $\bm{A}$ and $\bm{B}$.

\begin{lemma}
\label{lem:rank}
Let $\bm{A}, \bm{B} \in \mathbf{R}^{M \times N}$ be such that $\Rank(\bm{A}) \le R$ and $\Rank(\bm{B}) \le R$.
There exists an orthogonal matrix $\bm{P} \in \Argmin_{\bm{Q} \in \mathcal{O}} \lVert \bm{QA} - \bm{B} \rVert_F$ such that
\begin{align*}
\Rank(\bm{PA} - \bm{B}) \le R.
\end{align*}
\end{lemma}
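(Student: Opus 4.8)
The plan is to first pin down the \emph{entire} set of minimizers of the orthogonal Procrustes objective, and then exploit the residual orthogonal freedom that survives when $\bm{B}\bm{A}^\Tr$ is rank-deficient to select a minimizer $\bm{P}$ whose image contains $\Col(\bm{B})$, from which the rank bound on $\bm{PA}-\bm{B}$ is immediate.

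First I would expand $\lVert \bm{QA} - \bm{B} \rVert_F^2 = \lVert \bm{A} \rVert_F^2 + \lVert \bm{B} \rVert_F^2 - 2\operatorname{tr}(\bm{Q}^\Tr \bm{B}\bm{A}^\Tr)$, so that minimizing over $\bm{Q} \in \mathcal{O}_M$ amounts to maximizing $\operatorname{tr}(\bm{Q}^\Tr \bm{C})$, where $\bm{C} \doteq \bm{B}\bm{A}^\Tr$ has rank $k \doteq \Rank(\bm{C}) \le \min(\Rank \bm{A}, \Rank \bm{B}) \le R$. Taking an SVD $\bm{C} = \bm{U}\bm{\Sigma}\bm{V}^\Tr$ with $\bm{U},\bm{V} \in \mathcal{O}_M$ and substituting $\bm{Z} = \bm{V}^\Tr \bm{Q}^\Tr \bm{U}$ turns this into maximizing $\sum_i \sigma_i z_{ii}$ over $\bm{Z} \in \mathcal{O}_M$; since $\sigma_1 \ge \cdots \ge \sigma_k > 0 = \sigma_{k+1} = \cdots$, and orthogonality forces row and column $i$ of $\bm{Z}$ to equal $\bm{e}_i$ whenever $z_{ii}=1$, the maximizers are exactly those $\bm{Q}$ with $\bm{Z} = \operatorname{blockdiag}(\bm{I}_k, \bm{Z}')$ for an \emph{arbitrary} $\bm{Z}' \in \mathcal{O}_{M-k}$. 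Equivalently, $\bm{Q}$ is optimal if and only if $\bm{Q}\bm{v}_i = \bm{u}_i$ for $i \le k$, while $\bm{Q}$ restricts to an arbitrary orthogonal isomorphism $\bm{g}\colon \Span\{\bm{v}_1,\dots,\bm{v}_k\}^\perp \to \Span\{\bm{u}_1,\dots,\bm{u}_k\}^\perp$ on the complementary $(M-k)$-dimensional subspaces.

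Next I would use that $\Span\{\bm{v}_1,\dots,\bm{v}_k\} = \Col(\bm{A}\bm{B}^\Tr) \subseteq \Col(\bm{A})$ and $\Span\{\bm{u}_1,\dots,\bm{u}_k\} = \Col(\bm{B}\bm{A}^\Tr) \subseteq \Col(\bm{B})$, both of dimension $k$, and assume without loss of generality that $\Rank(\bm{A}) \ge \Rank(\bm{B})$; the reverse case follows by applying the statement to the pair $(\bm{B},\bm{A})$ and transposing, using $\lVert \bm{QA} - \bm{B} \rVert_F = \lVert \bm{A} - \bm{Q}^\Tr \bm{B} \rVert_F$ and $\Rank(\bm{PA}-\bm{B}) = \Rank(\bm{A} - \bm{P}^\Tr \bm{B})$. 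Splitting $\Col(\bm{A}) = \Span\{\bm{v}_i\} \oplus U'$ and $\Col(\bm{B}) = \Span\{\bm{u}_i\} \oplus W'$ orthogonally yields $\dim U' = \Rank \bm{A} - k \ge \Rank \bm{B} - k = \dim W'$, with $U'$ inside the domain of $\bm{g}$ and $W'$ inside its codomain. Because $\dim W' \le \dim U' \le M-k$, I can choose $\bm{g}$ (equivalently $\bm{Z}'$) so that $\bm{g}(U')$ is a $(\Rank\bm{A}-k)$-dimensional subspace of $\Span\{\bm{u}_i\}^\perp$ containing $W'$: map an orthonormal basis of $U'$ onto an orthonormal set whose span contains a prescribed orthonormal basis of $W'$, then extend to an orthogonal isomorphism of the two $(M-k)$-dimensional spaces. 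For the resulting optimal $\bm{P}$ we get $\Col(\bm{PA}) = \Span\{\bm{u}_i\} \oplus \bm{g}(U') \supseteq \Span\{\bm{u}_i\} \oplus W' = \Col(\bm{B})$, hence $\Col(\bm{PA}-\bm{B}) \subseteq \Col(\bm{PA}) + \Col(\bm{B}) = \Col(\bm{PA})$ and therefore $\Rank(\bm{PA}-\bm{B}) \le \dim \Col(\bm{PA}) = \Rank(\bm{A}) \le R$.

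I expect the main obstacle to be the middle step: correctly characterizing \emph{every} minimizer of the Procrustes problem in the rank-deficient regime — where the usual closed form $\bm{U}\bm{V}^\Tr$ is merely one minimizer among a whole orbit — and verifying that the leftover orthogonal freedom on the complementary subspaces is genuinely unconstrained. This is exactly what makes the subspace-nesting construction in the last step legitimate; everything else is routine linear algebra.
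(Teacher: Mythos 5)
Your proof is correct and takes essentially the same route as the paper's: both exploit the residual orthogonal freedom of the rank-deficient Procrustes solution (you by characterizing the full minimizer set via the trace formulation, the paper by using the non-uniqueness of the SVD of $\bm{B}\bm{A}^\Tr$ in Schönemann's closed form) to position the column spaces so that the residual has bounded rank. The only cosmetic difference is the direction of the nesting: under your WLOG $\Rank(\bm{A})\ge\Rank(\bm{B})$ you arrange $\Col(\bm{PA})\supseteq\Col(\bm{B})$, whereas the paper assumes the opposite ordering and arranges $\Col(\bm{PA})\subseteq\Col(\bm{B})$; both yield $\Rank(\bm{PA}-\bm{B})\le\max(\Rank(\bm{A}),\Rank(\bm{B}))\le R$, and both handle the remaining case by the same swap-and-transpose reduction.
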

\begin{proof}
Let $\bm{U \Sigma V}^\Tr$ be a singular value decomposition of $\bm{BA}^\Tr$ into $M \times M$ orthogonal matrices $\bm{U}, \bm{V}$ and an $M \times M$ diagonal matrix $\bm{\Sigma}$ containing the singular values sorted by magnitude, from largest to smallest.
\citet{schoenemann1966generalized} shows that, for any such decomposition, the orthogonal matrix $\bm{P} \doteq \bm{UV}^\Tr$ satisfies $\bm{P} \in \Argmin_{\bm{Q} \in \mathcal{O}} \lVert \bm{QA} - \bm{B} \rVert_F$.
We know that $r \doteq \Rank(\bm{BA}^\Tr) \le R \le M$.
If $r < M$, the last $M\!-\!r$ elements of the diagonal of $\bm{\Sigma}$ are zero, and $\bm{UV}^\Tr$ is not unique.
We will show that there is at least one pair $\bm{U}, \bm{V}$ that satisfies the claim.

Let $\Span(\mathcal{S})$ be the linear subspace spanned by a set of vectors $\mathcal{S}$.
For a matrix $\bm{M}$, let $\Col(\bm{M})$ be the linear subspace spanned by its columns, and $\Null(\bm{M})$ be its (right) nullspace.
Assume that $\Rank(\bm{A}) \le \Rank(\bm{B})$ and, without loss of generality, that $\Rank(\bm{B}) = R$.
By properties of the singular value decomposition and of the column space of matrix products, we have that
\begin{align*}
\Span(\{\bm{u}_1, \ldots, \bm{u}_r\}) = \Col(\bm{BA}^\Tr) \subseteq \Col(\bm{B}),
\end{align*}
We can thus choose columns $r+1, \ldots, R$ of $\bm{U}$ such that $\Span(\{\bm{u}_1, \ldots, \bm{u}_R\}) = \Col(\bm{B})$.
Similarly, we know that
\begin{align*}
\Span(\{\bm{v}_{r+1}, \ldots, \bm{v}_M\}) = \Null(\bm{BA}^\Tr) \supseteq \Null(\bm{A}^\Tr),
\end{align*}
and we can choose $\bm{V}$ such that $\Span(\{\bm{v}_{R+1}, \ldots, \bm{v}_{M}\}) \subseteq \Null(\bm{A}^\Tr)$.
It follows that the last $M\!-\!R$ rows of $\bm{V}^\Tr \bm{A}$ contain all zeros.
Letting $\bm{P} \doteq \bm{UV}^\Tr$, we have that $\Col(\bm{PA}) = \Col(\bm{UV}^\Tr \bm{A}) \subseteq \Span(\{\bm{u}_1, \ldots, \bm{u}_R\}) = \Col{\bm{B}}$ by construction.
In turn, we have that $\Col(\bm{PA} - \bm{B}) \subseteq \Col(\bm{B})$, and we conclude that $\Rank(\bm{PA} - \bm{B}) \le \Rank(\bm{B})$.

If $\Rank(\bm{A}) > \Rank(\bm{B})$, we can swap the matrices $\bm{A}$ and $\bm{B}$ in the argument above and find $\bm{G} \in \Argmin_{\bm{Q} \in \mathcal{O}} \lVert \bm{QB} - \bm{A} \rVert_F$ such that $\Rank(\bm{G}\bm{B} - \bm{A}) \le R$.
It is then easy to verify that setting $\bm{P} \doteq \bm{G}^\Tr$ verifies the claim.
\end{proof}
It is interesting to note that this lemma holds for the Frobenius norm, but does not hold for all Schatten $p$-norms.
We discuss this in more details in Appendix~\ref{app:tightness}.

Equipped with these, we can prove our main result.
\begin{proof}[Proof of Theorem~\ref{thm:main}]
For a matrix $\bm{A} \in \mathbf{R}^{D \times N}$, define the matrix absolute value $\lvert \bm{A} \rvert = (\bm{A}^\Tr \bm{A})^{1/2}$ as the unique $N \times N$ positive semidefinite matrix such that $\lvert \bm{A} \rvert^\Tr \lvert \bm{A} \rvert = \bm{A}^\Tr \bm{A}$.
The rank of $\lvert \bm{A} \rvert$ is equal to the rank of $\bm{A}$.
We have that
\begin{align}
\label{eq:first}
\lVert \bm{X}^\Tr \bm{X} - \bm{Y}^\Tr \bm{Y} \rVert_F
    \ge \lVert \lvert \bm{X} \rvert - \lvert \bm{Y} \rvert \rVert_{4}^2
    \ge (2D)^{-1/2} \lVert \lvert \bm{X} \rvert - \lvert \bm{Y} \rvert \rVert_F^2.
\end{align}
The first inequality follows from Lemma~\ref{lem:psk} with $p = 2$.
The second inequality comes from the fact that, for any matrix $\bm{A}$ of rank $r$, with non-zero singular values $\sigma_1, \ldots, \sigma_r$,
\begin{align*}
\lVert \bm{A} \rVert_p
    &= (\sigma_1^p + \cdots + \sigma_r^p)^{1/p}
    = (\bm{1}^\Tr \begin{bmatrix}\sigma_1^p & \cdots & \sigma_r^p \end{bmatrix})^{1/p} \\
    &\le \Big(\sqrt{r} \cdot \sqrt{\sigma_1^{2p} + \cdots + \sigma_r^{2p}}\Big)^{1/p}
    = r^{1/2p} \lVert \bm{A} \rVert_{2p},
\end{align*}
by the Cauchy-Schwarz inequality.
In our case, $\bm{A} = \lvert \bm{X} \rvert - \lvert \bm{Y} \rvert$, and since $\Rank(\lvert \bm{X} \rvert) \le D$ and $\Rank(\lvert \bm{Y} \rvert) \le D$, the difference is of rank at most $2D$.

Furthermore, Lemma~\ref{lem:rank} states that there is an orthogonal matrix $\bm{G} \in \mathbf{R}^{N \times N}$ such that
\begin{align}
\label{eq:second}
\lVert \bm{G} \lvert \bm{X} \rvert - \lvert \bm{Y} \rvert \rVert_F
    \le \lVert \lvert \bm{X} \rvert - \lvert \bm{Y} \rvert \rVert_F
\end{align}
and $\Rank(\bm{G} \lvert \bm{X} \rvert - \lvert \bm{Y} \rvert) \le D$.
This implies that there is an orthogonal matrix $\bm{H} \in \mathbf{R}^{N \times N}$ such that the last $N\!-\!D$ rows of $\bm{H}(\bm{G} \lvert \bm{X} \rvert - \lvert \bm{Y} \rvert)$ are all zeros.
Let $\bm{S}, \bm{T} \in \mathbf{R}^{D \times N}$ be such that $\bm{S}$ coincides with the $D$ first rows of $\bm{HG} \lvert \bm{X} \rvert$, and $\bm{T}$ coincides with the $D$ first rows of $\bm{H} \lvert \bm{Y} \rvert$.
By unitary invariance of the Frobenius norm and by construction of $\bm{H}$, we have that
\begin{align}
\label{eq:third}
\lVert \bm{G} \lvert \bm{X} \rvert - \lvert \bm{Y} \rvert \rVert_F
    = \lVert \bm{H}(\bm{G} \lvert \bm{X} \rvert - \lvert \bm{Y} \rvert) \rVert_F
    = \lVert \bm{S} - \bm{T} \rVert_F.
\end{align}
Since $\bm{S}^\Tr \bm{S} = \bm{X}^\Tr \bm{X}$, there is an orthogonal matrix $\bm{U} \in \mathbf{R}^{D \times D}$ such that $\bm{S} = \bm{U} \bm{X}$.
Similarly, there is an orthogonal matrix $\bm{V} \in \mathbf{R}^{D \times D}$ such that  $\bm{T} = \bm{V} \bm{Y}$.
By unitary invariance, we have that
\begin{align}
\label{eq:fourth}
\lVert \bm{S} - \bm{T} \rVert_F
    = \lVert \bm{UX} - \bm{VY} \rVert_F
    = \lVert \bm{PX} - \bm{Y} \rVert_F,
\end{align}
where $\bm{P} = \bm{V}^\Tr \bm{U}$.
The claim follows by combining~\eqref{eq:first}, \eqref{eq:second}, \eqref{eq:third} and~\eqref{eq:fourth}.
\end{proof}

\subsection{Tightness of upper bound}
\label{app:tightness}

In this section, we provide an explicit example of a pair of embedding matrices that achieves equality in the upper-bound in Theorem~\ref{thm:main}.
Let $D = 1$, $N = 2$, and let
\begin{align*}
\bm{X} &= \begin{bmatrix}\sqrt{\frac{\varepsilon}{2\sqrt{2}}} & \sqrt{\frac{\varepsilon}{2\sqrt{2}}} \end{bmatrix},
&\bm{Y} &= \begin{bmatrix}\sqrt{\frac{\varepsilon}{2\sqrt{2}}} & -\sqrt{\frac{\varepsilon}{2\sqrt{2}}} \end{bmatrix}.
\end{align*}
There are only two possible orthogonal transformations ($\pm \begin{bmatrix} 1 \end{bmatrix}$), both of which align $\bm{X}$ and $\bm{Y}$ equally well.
It is easy to verify that
\begin{align*}
\lVert \bm{X}^\Tr \bm{X} - \bm{Y}^\Tr \bm{Y} \rVert_F &= \varepsilon,
&\max_{\bm{Q} \in \{\pm \begin{bmatrix} 1 \end{bmatrix} \}} \lVert \bm{QX} - \bm{Y} \rVert_F &= 2^{1/4} \sqrt{\varepsilon}.
\end{align*}
This satisfies equality in the bound of Theorem~\ref{thm:main}.
The example can be extended to $D > 1$ as follows.
Let $\bm{e}_i \in \mathbf{R}^D$, be the $i$th standard basis vector, let $N = 2D$, and let $\bm{X}$ and $\bm{Y}$ be such that, for $i = 1, \ldots, D$,
\begin{align*}
\bm{x}_{2i - 1} &= \sqrt{\tfrac{\varepsilon}{2\sqrt{2D}}} \bm{e}_i,
    &\bm{x}_{2i} &= \sqrt{\tfrac{\varepsilon}{2\sqrt{2D}}} \bm{e}_i, \\
\bm{y}_{2i - 1} &= \sqrt{\tfrac{\varepsilon}{2\sqrt{2D}}} \bm{e}_i,
    &\bm{y}_{2i} &= -\sqrt{\tfrac{\varepsilon}{2\sqrt{2D}}} \bm{e}_i.
\end{align*}
These embedding matrices also satisfy
\begin{align*}
\lVert \bm{X}^\Tr \bm{X} - \bm{Y}^\Tr \bm{Y} \rVert_F &= \varepsilon,
&\max_{\bm{Q} \in \mathcal{O}_D} \lVert \bm{QX} - \bm{Y} \rVert_F &= (2D)^{1/4} \sqrt{\varepsilon}.
\end{align*}
For $D = 1$, the bound holds for all Schatten-$p$ norms, not only the Frobenius norm.
However, for $D > 1$, this example can be used to show that the bound does not hold for general values of $p$.

\section{Additional experimental details}
\label{app:eval}

This appendix mirrors the structure of the main text, with Section~\ref{app:movielens} covering model retraining, Section~\ref{app:text} covering partial upgrades, and Section~\ref{app:multimodal} covering multimodal embeddings.

\subsection{Maintaining compatibility across retrainings}
\label{app:movielens}

The experiments are conducted using the MovieLens-25M dataset, which contains \num{25} million ratings from \num{162541} users on \num{59047} movies between 2008 and 2019.
We ignore the rating values and treat the ratings as binary implicit user feedback.

The core of our experiment involves training a matrix factorization-based BPR (Bayesian Personalization Ranking) model.
This model is well-suited for implicit feedback, as it frames the learning process as a ranking task.
During training, the model learns to rank an item the user has interacted with (i.e., a movie a user has rated) higher than an item the user is unlikely to have interacted with (i.e., a movie sampled uniformly at random from the set of movies the user has not rated).
The model is optimized using the following objective function:
\begin{align*}
\ell_{\text{BPR}}(\bm{V}, \bm{X}) = -\sum_{(u,i,j)\in D_s} \ln \sigma(\bm{v}_u\cdot \bm{x}_i - \bm{v}_u\cdot \bm{x}_j) + \lambda (\lVert \bm{V} \rVert_F^2 + \lVert \bm{X} \rVert_F^2)
\end{align*}
Here, $u$ denotes a user, $i$ is a movie the user rated, and $j$ is a movie the user did not rate.
$\bm{v}_u \in \mathbf{R}^D$ represents the learned embedding vector for user $u$, while $\bm{x}_i, \bm{x}_j \in \mathbf{R}^D$ represent the learned embedding vectors for movies $i$ and $j$, respectively.
The equation represents the pairwise ranking loss, which seeks to maximize the difference between the positive and negative preferences.

The primary objective of this experiment is to evaluate the compatibility of embeddings across different training sessions.
This phenomenon is particularly relevant in real-world scenarios where models are periodically retrained using new data.
We simulate this industry practice by conducting multiple training runs on different time windows of the MovieLens-25M dataset.

Specifically, we create four distinct training partitions, each spanning a 6-month period.
These partitions are sequentially aligned to simulate a rolling time window, with the data preceding four different months as the re-training time points: 2019-08 to 2019-11.
For each partition, a standard preprocessing step is applied to ensure data quality.
We filter the data to only include users and movies that have a minimum of 5 ratings within that specific partition.
This preprocessing results in a different number of users, movies, and ratings in each partition, reflecting the natural evolution of the dataset over time.
The counts for each partition and the overlapping between partitions are shown in table \ref{tab:movielens-data}.

\begin{table}[t]
  \caption{%
MovieLens experiment data}
  \vspace{1mm}
  \label{tab:movielens-data}
  \centering
  \sisetup{%
  table-alignment=right, %
  mode=text, %
}
\begin{tabular}{|l|l|l|l|l|}
\hline
 &
  \textbf{\begin{tabular}[c]{@{}l@{}}Partition1\\ (2019-02 - 07)\end{tabular}} &
  \textbf{\begin{tabular}[c]{@{}l@{}}Partition2\\ (2019-03 - 08)\end{tabular}} &
  \textbf{\begin{tabular}[c]{@{}l@{}}Partition3\\ (2019-04 - 09)\end{tabular}} &
  \textbf{\begin{tabular}[c]{@{}l@{}}Partition4\\ (2019-05 - 10)\end{tabular}} \\ \hline
\textbf{Partition1} &
  \begin{tabular}[c]{@{}l@{}}Ratings: 617,643\\ Users: 6,281\\ Movies: 9,537\end{tabular} &
   &
   &
   \\ \hline
\textbf{Partition2} &
  \begin{tabular}[c]{@{}l@{}}Overlapping\\ Users: 5,430\\ Movies: 8,949\end{tabular} &
  \begin{tabular}[c]{@{}l@{}}Ratings: 610,480\\ Users: 6,132\\ Movies: 9,452\end{tabular} &
   &
   \\ \hline
\textbf{Partition3} &
  \begin{tabular}[c]{@{}l@{}}Overlapping\\ Users: 4,591\\ Movies: 8,501\end{tabular} &
  \begin{tabular}[c]{@{}l@{}}Overlapping\\ Users: 5,258\\ Movies: 8,738\end{tabular} &
  \begin{tabular}[c]{@{}l@{}}Ratings: 610,296\\ Users: 6,091\\ Movies: 9,328\end{tabular} &
   \\ \hline
\textbf{Partition4} &
  \begin{tabular}[c]{@{}l@{}}Overlapping\\ Users: 3,944\\ Movies: 8,139\end{tabular} &
  \begin{tabular}[c]{@{}l@{}}Overlapping\\ Users: 4,592\\ Movies: 8,320\end{tabular} &
  \begin{tabular}[c]{@{}l@{}}Overlapping\\ Users: 5,391\\ Movies: 8,613\end{tabular} &
  \begin{tabular}[c]{@{}l@{}}Ratings: 594,011\\ Users: 6,007\\ Movies: 9,080\end{tabular} \\ \hline
\end{tabular}

\end{table}

Hyperparameter tuning for the model is conducted using a separate, distinct dataset split.
The training data for this process consists of 6 months of ratings between 2019-01 and 2019-06.
Validation is performed on a subsequent 1-month period of data from 2019-07.
The model is optimized using the Adam optimizer, with the number of training epochs fixed at 30.
Hyperparameter tuning was performed using a grid search over the following parameter space.
\begin{itemize}
\item batch size: $\{512, 1024, 2048, 4096\}$
\item embedding dimensionality: $\{8, 16, 32, 64\}$
\item bias term for the movies: $\{\text{true}, \text{false}\}$
\item learning rate: $\{1, 0.1, 0.01, 0.001\}$
\item weight decay: $\{0.1, 0.01, 0.001, 0\}$
\end{itemize}

The validation task is a retrieval problem.
For each user in the validation set, the model ranks all movies from the training data based on the dot product of the user and movie embeddings.
The performance is measured by the Hit Rate at $K$ ($HR@K$), which quantifies whether a rated movie from the validation set appears within the top $K$ ranked movies for that user.
Based on the performance on the validation set, measured by the Hit Rate at 100 ($HR@100$), the best configuration found was: batch size: 4096; embedding dimension size: 8; including movie bias: false; learning rate: 0.01; weight decay: 0.

In addition to the four partitions trained from scratch as baseline setting, we introduce three alternative training settings to explore methods for mitigating embedding drift and maintaining compatibility.
These scenarios use the embeddings from the first partition (trained on data starting from 2019-02) as a reference point for the subsequent three partitions.

\begin{description}
\item[Warmstart:] The training process for Partitions 2, 3, and 4 is initialized with the learned embeddings (weights) from Partition 1. The hyperparameters keep same as the baseline setting except the training epochs are decreased to 10.

\item[Autoencoding loss:] A regularization loss term is added to the training objective for Partitions 2, 3, and 4. This loss penalizes the distance between the newly learned embeddings and the embeddings from Partition 1 ($\bm{V}_0, \bm{X}_0$), encouraging them to stay close to the reference. The hyperparameters keep same as the baseline setting, and the regularization strength is set as $\lambda_{\text{auto}} = 1.0$.
\begin{align*}
\ell_{\text{auto}}(\bm{V}, \bm{X})=\ell_{\text{BPR}}+\lambda_{\text{auto}} (\lVert \bm{V} - \bm{V}_0\rVert_F^2 + \lVert \bm{X} - \bm{X}_0\rVert_F^2)
\end{align*}

\item[BC-Aligner:] This method introduces a learnable transformation matrix, $\bm{A}$, which is co-trained with the user and movie embeddings for Partitions 2, 3 and 4.
A regularization loss is applied to minimize the distance between the transformed embeddings ($\bm{AV}$ and $\bm{AX}$) and the reference embeddings from Partition 1 ($\bm{V}_0$ and $\bm{X}_0$), thus explicitly aligning the new embedding space with the first one.
The hyperparameters keep same as the baseline setting, and the regularization strength is set as $\lambda_{\text{BC}} = 1.0$.
\end{description}
\begin{align*}
\ell_{\text{BC}}(\bm{V}, \bm{X}) = \ell_{\text{BPR}} + \lambda_{\text{BC}} (\lVert \bm{AV} - \bm{V}_0\rVert_F^2 + \lVert \bm{AX} - \bm{X}_0\rVert_F^2)
\end{align*}

For the movie genre classification task, we use the movie metadata information in the MovieLens-25M dataset. It includes a genre list for each movie. The genres are selected from a list of 19 different genre terms.

\subsection{Combining different models for text retrieval}
\label{app:text}

Table~\ref{tab:text-tasks} introduces the three text retrieval tasks evaluated in Section~\ref{sec:text}, as well as two larger datasets used to sample training data to learn alignment matrices.
Table~\ref{tab:text-models} provides summary statistics for the text embedding models used in the experiments of that section.
Figure~\ref{fig:text-samplecomplex} replicates the sample complexity analysis of Section~\ref{sec:text} on the FEVER dataset.
Qualitatively, the conclusions do not differ from those obtained on HotpotQA.

Figure~\ref{fig:text-matryoshka} visualizes three alignment matrices, contrasting matrices that align two embeddings trained with MRL with matrices that align embeddings not trained with MRL.
MRL encourages representations in which the leading dimensions capture most of the semantic variability. 
Consistent with this property, we find that $\bm{Q}^\star$ between two Matryoshka models typically aligns the first \num{16}--\num{32} dimensions of one embedding space with the corresponding leading dimensions of the other.

\begin{table}[t]
  \caption{%
Summary statistics for the text retrieval datasets studied in Section~\ref{sec:text}.
All datasets are part of the MMTEB benchmark \citep{enevoldsen2025mmteb}.}
  \vspace{2mm}
  \label{tab:text-tasks}
  \centering
  \sisetup{%
  table-alignment=right, %
  mode=text, %
}
\begin{tabular}{
    l
    S[table-format=4.0]
    S[table-format=7.0]
    l}
  \toprule
    Name
    & {\# queries}
    & {\# documents}
    & Reference
    \\
  \midrule
    HotpotQA-HN &        1000 &  225621 & \citet{yang2018hotpotqa} \\
    FEVER-HN    &        1000 &  163698 & \citet{thorne2018fever} \\
    TREC-COVID             &          50 &  171332 & \citet{roberts2021searching} \\
    \addlinespace
    HotpotQA              & \textemdash & 5233329 & \citet{yang2018hotpotqa} \\
    FEVER                 & \textemdash & 5416568 & \citet{thorne2018fever} \\
  \bottomrule
\end{tabular}

\end{table}

\begin{table}[t]
  \caption{%
Summary statistics of text embedding models used in the experiments of Section~\ref{sec:text}.}
  \vspace{2mm}
  \label{tab:text-models}
  \centering
  \sisetup{%
  table-alignment=right, %
  mode=text, %
}
\begin{tabular}{
    l
    S[table-format=4.0]
    l
    l
    l}
  \toprule
    Name
    & {$D$}
    & Release date
    & Resizeable
    & Reference
    \\
  \midrule
    nomic-embed-text-v1.5 &  768 & 2024-02 & Yes & \citet{nussbaum2025nomic} \\
    bge-small-en-v1.5     &  384 & 2023-09 & No  & \citet{xiao2024cpack} \\
    Qwen3-Embedding-0.6B  & 1024 & 2025-06 & Yes & \citet{zhang2025qwen3} \\
    all-MiniLM-L6-v2      &  384 & 2021-08 & No  & N/A \\
    sentence-t5-base      &  768 & 2021-08 & No  & \citet{ni2022sentencet5} \\
    LaBSE                 &  768 & 2020-07 & No  & \citet{feng2022language} \\
    rubert-tiny2          &  312 & 2021-10 & No  & N/A \\
  \addlinespace
    bge-base-en-v1.5      &  768 & 2023-09 & No   & \citet{xiao2024cpack} \\
    gte-base-en-v1.5      &  768 & 2024-04 & Yes  & \citet{li2023towards} \\
  \bottomrule
\end{tabular}

\end{table}

\begin{figure}[t]
  \centering
  \includegraphics{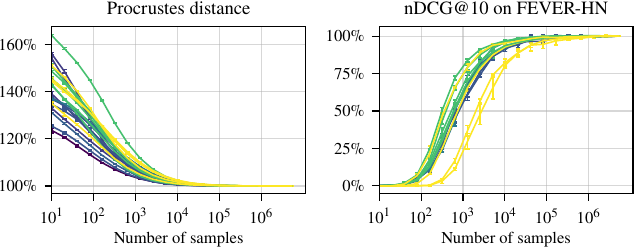}
  \caption{%
Performance vs. number of samples used to estimate $\bm{Q}^\star$ across 21 model pairs, normalized by full-sample performance on FEVER.
Brighter colors indicate more free parameters in $\bm{Q}^\star$.
}
  \label{fig:text-samplecomplex}
\end{figure}

\begin{figure}[t]
  \centering
  \includegraphics{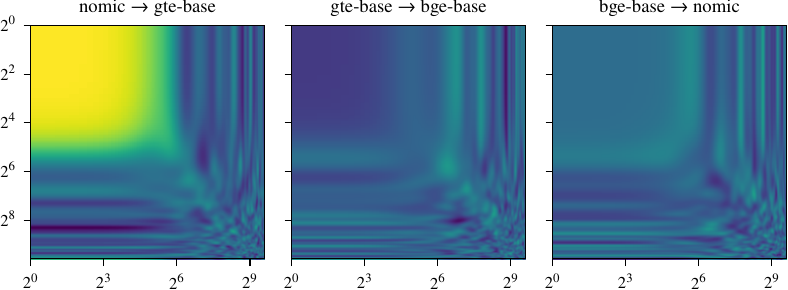}
  \caption{%
Visualization of orthogonal matrices aligning pairs of models.
The matrix aligning nomic to gte-base tends to align the first \num{16}--\num{32} dimensions of nomic embeddings with the corresponding leading dimensions of gte-base embeddings.
}
  \label{fig:text-matryoshka}
\end{figure}

\subsection{Improving mixed-modality search}
\label{app:multimodal}

This section provides additional details pertaining to Section~\ref{sec:multimodal} in the main text.
We start by arguing why centering is not necessarily a principled way to align different embedding spaces.
Then, we provide information on our experimental setup as well as additional results.

\paragraph{Centering does not imply alignment.}
Through an explicit example in two dimensions, we argue that centering embedding spaces does not necessarily help aligning them.
Let
\begin{align*}
\bm{x}_1 &= \begin{bmatrix}1 \\ -\varepsilon\end{bmatrix},
&\bm{x}_2 &= \begin{bmatrix}1 \\ +\varepsilon\end{bmatrix},
&\bm{y}_1 &= \begin{bmatrix}-\varepsilon \\ 1\end{bmatrix},
&\bm{y}_2 &= \begin{bmatrix}+\varepsilon \\ 1\end{bmatrix}.
\end{align*}
Letting $\bm{\mu}_x = (\bm{x}_1 + \bm{x}_2) / 2$ and $\bm{\mu}_y = (\bm{y}_1 + \bm{y}_2) / 2$, and denoting the centered embeddings by $\tilde{\bm{x}}_i = \bm{x}_i - \bm{\mu}_x$ and $\tilde{\bm{y}}_i = \bm{y}_i - \bm{\mu}_y$, we have that
\begin{align*}
\tilde{\bm{x}}_1 &= \begin{bmatrix}0 \\ -\varepsilon\end{bmatrix},
&\tilde{\bm{x}}_2 &= \begin{bmatrix}0 \\ +\varepsilon\end{bmatrix},
&\tilde{\bm{y}}_1 &= \begin{bmatrix}-\varepsilon \\ 0\end{bmatrix},
&\tilde{\bm{y}}_2 &= \begin{bmatrix}+\varepsilon \\ 0\end{bmatrix}.
\end{align*}
Clearly, $\tilde{\bm{X}}$ and $\tilde{\bm{Y}}$ are not aligned ($\tilde{\bm{X}}^\Tr \tilde{\bm{Y}} = \bm{0}_{2 \times 2}$), and arguably they are less aligned than the original embeddings $\bm{X}$ and $\bm{Y}$.
On the other hand, observe that the orthogonal matrix
\begin{align*}
\bm{Q}^\star = \begin{bmatrix}
0 & 1 \\
1 & 0 \\
\end{bmatrix}
\end{align*}
perfectly aligns the embeddings: $\bar{\bm{X}} \doteq \bm{Q}^\star \bm{X} = \bm{Y}$.

\paragraph{Description of the models.}
Table~\ref{tab:multimodal-models} provides a brief description of the different multimodal models we consider.

\begin{table}[t]
  \caption{%
Multimodal embedding models used for the experiments on MixBench.}
  \vspace{1mm}
  \label{tab:multimodal-models}
  \centering
  \sisetup{%
  table-alignment=right, %
  mode=text, %
}
\begin{tabular}{
    l
    S[table-format=4.0]
    l}
  \toprule
    Name
    & {$D$}
    & URL
    \\
  \midrule
    CLIP-B/16     &  768 & {\scriptsize \url{https://huggingface.co/openai/clip-vit-base-patch16}} \\
    CLIP-L/14     &  768 & {\scriptsize \url{https://huggingface.co/openai/clip-vit-large-patch14}} \\
    OpenCLIP-L/14 &  768 & {\scriptsize \url{https://huggingface.co/laion/CLIP-ViT-L-14-laion2B-s32B-b82K}} \\
    SigLIP-400m   &  768 & {\scriptsize \url{https://huggingface.co/google/siglip-so400m-patch14-384}} \\
  \bottomrule
\end{tabular}

\end{table}

\paragraph{Detailed experimental results.}
Figure~\ref{fig:multimodal-curves} presents retrieval performance for the four methods we consider as a function of the fusion weight $\alpha$.
We observe that while the choice of $\alpha$ does impact absolute performance, the relative performance of different methods is relatively stable across a wide range of values.

\begin{figure}[t]
  \centering
  \includegraphics{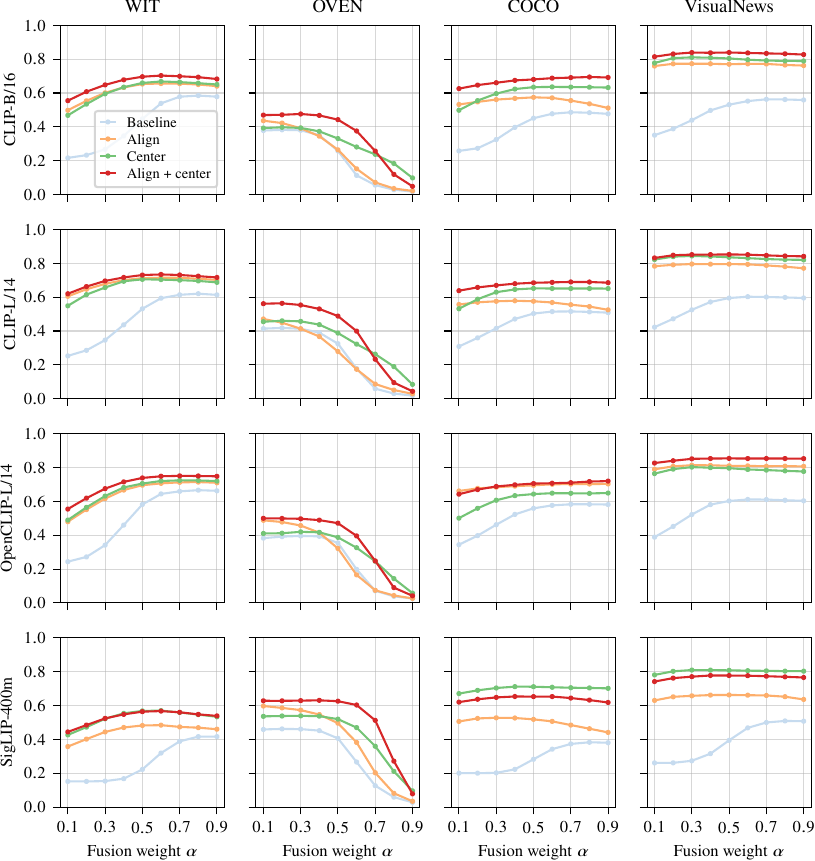}
  \caption{%
Retrieval performance (nDCG@10) on the four MixBench subsets, as a function of the fusion weight $\alpha$.
We evaluate four multimodal embedding models under different post-processing methods.
}
  \label{fig:multimodal-curves}
\end{figure}

\end{document}